\DeclareMathOperator{\Mat}{Mat}
\DeclareMathOperator{\ind}{ind}
\DeclareMathOperator{\sgn}{sgn}
\DeclareMathOperator{\arcosh}{arcosh}
\newcommand{\flatBinom}{{\!\!\!\!\!\!\!\!\!\!\phantom{{{i_i}_i}_i}\smash{\text{\raisebox{-0.1em}{$\binom{n+d}{d}$}}}}}
\newenvironment{enum*}
{\parskip=-4pt
\begin{enumerate*}

}
{\end{enumerate*}}
\theoremstyle{plain}
\newtheorem{theorem}{Theorem}[section]
\newtheorem{lemma}[theorem]{Lemma}
\newtheorem{corollary}[theorem]{Corollary}
\theoremstyle{definition}
\newtheorem{definition}[theorem]{Definition}
\newtheorem{remark}[theorem]{Remark}
\title{A closer look at the approximation {capabilities} of neural networks}
\author{Kai Fong Ernest Chong\\
Information Systems Technology and Design (ISTD) pillar,\\
Singapore University of Technology and Design, Singapore\\
\texttt{ernest\_chong@sutd.edu.sg}
}
\begin{document}

\maketitle

\begin{abstract}
The \textit{universal approximation theorem}, in one of its most general versions, says that if we consider only continuous activation functions $\sigma$, then a standard feedforward neural network with one hidden layer is able to approximate any continuous multivariate function $f$ to any given approximation threshold $\varepsilon$, if and only if $\sigma$ is non-polynomial. In this paper, we give a direct algebraic proof of the theorem. Furthermore we shall explicitly quantify the number of hidden units required for approximation. Specifically, if $X\subseteq \mathbb{R}^n$ is compact, then a neural network with $n$ input units, $m$ output units, and a single hidden layer with $\binom{n+d}{d}$ hidden units (independent of $m$ and $\varepsilon$), can uniformly approximate any polynomial function $f:X \to \mathbb{R}^m$ whose total degree is at most $d$ for each of its $m$ coordinate functions. In the general case that $f$ is any continuous function, we show there exists some $N\in \mathcal{O}(\varepsilon^{-n})$ (independent of $m$), such that $N$ hidden units would suffice to approximate $f$. We also show that this uniform approximation property (UAP) still holds even under seemingly strong conditions imposed on the weights. We highlight several consequences: (i) For any $\delta > 0$, the UAP still holds if we restrict all non-bias weights $w$ in the last layer to satisfy $|w| < \delta$. (ii) There exists some $\lambda>0$ (depending only on $f$ and $\sigma$), such that the UAP still holds if we restrict all non-bias weights $w$ in the first layer to satisfy $|w|>\lambda$. (iii) If the non-bias weights in the first layer are \emph{fixed} and randomly chosen from a suitable range, then the UAP holds with probability $1$.
\end{abstract}


\section{Introduction and Overview}
A standard (feedforward) neural network with $n$ input units, $m$ output units, and with one or more hidden layers, refers to a computational model $\mathcal{N}$ that can compute a certain class of functions $\rho: \mathbb{R}^n \to \mathbb{R}^m$, where $\rho = \rho_W$ is parametrized by $W$ (called the {\it weights} of $\mathcal{N}$). Implicitly, the definition of $\rho$ depends on a choice of some fixed function $\sigma: \mathbb{R} \to \mathbb{R}$, called the {\it activation function} of $\mathcal{N}$. Typically, $\sigma$ is assumed to be continuous, and historically, the earliest commonly used activation functions were sigmoidal.

A key fundamental result justifying the use of sigmoidal activation functions was due to \citet{Cybenko1989:UnivApprox}, \citet{Hornik1989:MultilayerFeedforwardNetworksUniversalApprox}, and \citet{Funahashi1989:ApproximateRealizationsNeuralNetworks}, who independently proved the first version of what is now famously called the {\it universal approximation theorem}. This first version says that if $\sigma$ is sigmoidal, then a standard neural network with one hidden layer would be able to uniformly approximate any continuous function $f: X \to \mathbb{R}^m$ whose domain $X\subseteq \mathbb{R}^n$ is compact. \citet{Hornik1991:ApproxCapabilitiesMultilayerFeedforwardNetworks} extended the theorem to the case when $\sigma$ is any continuous bounded non-constant activation function. Subsequently, \citet{LeshnoEtAl1993:UnivApproxNonPolynomialActivation} proved that for the class of continuous activation functions, a standard neural network with one hidden layer is able to uniformly approximate any continuous function $f: X \to \mathbb{R}^m$ on any compact $X\subseteq \mathbb{R}^n$, if and only if $\sigma$ is non-polynomial.

Although a single hidden layer is sufficient for the uniform approximation property (UAP) to hold, the number of hidden units required could be arbitrarily large. Given a subclass $\mathcal{F}$ of real-valued continuous functions on a compact set $X\subseteq \mathbb{R}^n$, a fixed activation function $\sigma$, and some $\varepsilon > 0$, let $N = N(\mathcal{F}, \sigma, \varepsilon)$ be the minimum number of hidden units required for a single-hidden-layer neural network to be able to uniformly approximate \emph{every} $f\in \mathcal{F}$ within an approximation error threshold of $\varepsilon$. If $\sigma$ is the rectified linear unit (ReLU) $x\mapsto \max(0,x)$, then $N$ is at least $\Omega(\frac{1}{\smash{\sqrt{\varepsilon}}})$ when $\mathcal{F}$ is the class of $C^2$ non-linear functions~\citep{Yarotsky2017:DeepReLUApprox}, or the class of strongly convex differentiable functions~\citep{LiangSrikant2016:WhyDeepNeuralNetworks}; see also \citep{AroraEtAl2018:UnderstandingDeepNNReLU}. If $\sigma$ is any smooth non-polynomial function, then $N$ is at most $\mathcal{O}(\varepsilon^{-n})$ for the class of $C^1$ functions with bounded Sobolev norm \citep{Mhaskar1996:NNOptimalApproxSmooth}; cf. \citep[Thm. 6.8]{Pinkus1999:approximationtheory}, \citep{MaiorovPinkus1999:LowerBoundApprox}. As a key highlight of this paper, we show that if $\sigma$ is an \emph{arbitrary} continuous non-polynomial function, then $N$ is at most $\mathcal{O}(\varepsilon^{-n})$ for the entire class of continuous functions. In fact, we give an explicit upper bound for $N$ in terms of $\varepsilon$ and the modulus of continuity of $f$, so better bounds could be obtained for certain subclasses $\mathcal{F}$, which we discuss further in Section \ref{sec:Discussion}. Furthermore, even for the wider class $\mathcal{F}$ of all continuous functions $f:X\to \mathbb{R}^m$, the bound is still $\mathcal{O}(\varepsilon^{-n})$, independent of $m$.

To prove this bound, we shall give a direct algebraic proof of the universal approximation theorem, in its general version as stated by \citet{LeshnoEtAl1993:UnivApproxNonPolynomialActivation} (i.e. $\sigma$ is continuous and non-polynomial). An important advantage of our algebraic approach is that we are able to glean additional information on sufficient conditions that would imply the UAP. Another key highlight we have is that if $\mathcal{F}$ is the subclass of polynomial functions $f: X \to \mathbb{R}^m$ with total degree at most $d$ for each coordinate function, then $\binom{n+d}{d}$ hidden units would suffice. In particular, notice that our bound $N\leq \binom{n+d}{d}$ does not depend on the approximation error threshold $\varepsilon$ or the output dimension $m$.

We shall also show that the UAP holds even under strong conditions on the weights. Given any $\delta > 0$, we can always choose the non-bias weights in the last layer to have small magnitudes no larger than $\delta$. Furthermore, we show that there exists some $\lambda >0$ (depending only on $\sigma$ and the function $f$ to be approximated), such that the non-bias weights in the first layer can always be chosen to have magnitudes greater than $\lambda$. Even with these seemingly strong restrictions on the weights, we show that the UAP still holds. Thus, our main results can be collectively interpreted as a quantitative refinement of the universal approximation theorem, with extensions to restricted weight values.

\textbf{Outline:} Section \ref{sec:Preliminaries} covers the preliminaries, including relevant details on arguments involving dense sets. Section \ref{sec:Results} gives precise statements of our results, while Section \ref{sec:Discussion} discusses the consequences of our results. Section \ref{sec:Proofs} introduces our algebraic approach and includes most details of the proofs of our results; details omitted from Section \ref{sec:Proofs} can be found in the appendix. Finally, Section \ref{sec:ConclusionFurtherRemarks} concludes our paper with further remarks.

\section{Preliminaries}\label{sec:Preliminaries}
\subsection{Notation and Definitions}
Let $\mathbb{N}$ be the set of non-negative integers, let $\mathbf{0}_n$ be the zero vector in $\mathbb{R}^n$, and let $\Mat(k, \ell)$ be the vector space of all $k$-by-$\ell$ matrices with real entries. For any function $f: \mathbb{R}^n \to \mathbb{R}^m$, let $f^{[t]}$ denote the $t$-th coordinate function of $f$ (for each $1\leq t\leq m$). Given $\alpha = (\alpha_1, \dots, \alpha_n) \in \mathbb{N}^n$ and any $n$-tuple $x = (x_1, \dots, x_n)$, we write $x^{\alpha}$ to mean $x_1^{\alpha_1}\cdots x_n^{\alpha_n}$. If $x\in \mathbb{R}^n$, then $x^{\alpha}$ is a real number, while if $x$ is a sequence of variables, then $x^{\alpha}$ is a \textit{monomial}, i.e. an $n$-variate polynomial with a single term. Let $\mathcal{W}_N^{n,m} := \{W \in \Mat(n+1, N) \times \Mat(N+1, m)\}$ for each $N\geq 1$, and define $\mathcal{W}^{n,m} = \bigcup_{N\geq 1} \mathcal{W}_N^{n,m}$. If the context is clear, we supress the superscripts $n,m$ in $\mathcal{W}^{n,m}_N$ and $\mathcal{W}^{n,m}$.

Given any $X\subseteq \mathbb{R}^n$, let $\mathcal{C}(X)$ be the vector space of all continuous functions $f:X \to \mathbb{R}$. We use the convention that every $f\in \mathcal{C}(X)$ is a function $f(x_1, \dots, x_n)$ in terms of the variables $x_1, \dots, x_n$, unless $n=1$, in which case $f$ is in terms of a single variable $x$ (or $y$). We say $f$ is \textit{non-zero} if $f$ is not identically the zero function on $X$. Let $\mathcal{P}(X)$ be the subspace of all polynomial functions in $\mathcal{C}(X)$. For each $d\in \mathbb{N}$, let $\mathcal{P}_{\leq d}(X)$ (resp. $\mathcal{P}_d(X)$) be the subspace consisting of all polynomial functions of total degree $\leq d$ (resp. exactly $d$). More generally, let $\mathcal{C}(X,\mathbb{R}^m)$ be the vector space of all continuous functions $f: X \to \mathbb{R}^m$, and define $\mathcal{P}(X,\mathbb{R}^m)$, $\mathcal{P}_{\leq d}(X,\mathbb{R}^m)$, $\mathcal{P}_d(X,\mathbb{R}^m)$ analogously.

Throughout, we assume that $\sigma \in \mathcal{C}(\mathbb{R})$. For every $W = (W^{\smash{(1)}}, W^{\smash{(2)}}) \in \mathcal{W}$, let $\mathbf{w}_{\smash{j}}^{\smash{(k)}}$ be the $j$-th column vector of $W^{\smash{(k)}}$, and let $w_{i,j}^{\smash{(k)}}$ be the $(i,j)$-th entry of $W^{\smash{(k)}}$ (for $k=1,2$). The index $i$ begins at $i=0$, while the indices $j$, $k$ begin at $j=1$, $k=1$ respectively. For convenience, let $\widehat{\mathbf{w}}_j^{(k)}$ denote the truncation of $\mathbf{w}_j^{(k)}$ obtained by removing the first entry $w_{0,j}^{(k)}$. Define the function $\rho_W^{\sigma}: \mathbb{R}^n \to \mathbb{R}^m$ so that for each $1\leq j\leq m$, the $j$-th coordinate function $\rho_W^{\sigma\,[j]}$ is given by the map
\[x \mapsto w_{0,j}^{(2)} + \textstyle\sum_{i=1}^{N} w_{i,j}^{(2)} \sigma(\mathbf{w}_i^{(1)}\cdot (1,x)),\]
where ``$\cdot$'' denotes dot product, and $(1,x)$ denotes a column vector in $\mathbb{R}^{n+1}$ formed by concatenating $1$ before $x$. The class of functions that neural networks $\mathcal{N}$ with one hidden layer can compute is precisely $\{\rho_W^{\sigma}: W\in \mathcal{W}\}$, where $\sigma$ is called the \textit{activation function} of $\mathcal{N}$ (or of $\rho_W^{\sigma}$). Functions $\rho_W^{\sigma}$ satisfying $W\in \mathcal{W}_N$ correspond to neural networks with $N$ hidden units (in its single hidden layer). Every $w_{i,j}^{(k)}$ is called a {\it weight} in the $k$-th layer, where $w_{i,j}^{[k]}$ is called a \textit{bias weight} (resp. \textit{non-bias weight}) if $i=0$ (resp. $i\neq 0$).

Notice that we do not apply the activation function $\sigma$ to the output layer. This is consistent with previous approximation results for neural networks. The reason is simple: $\sigma\circ \rho_W^{\sigma\,[j]}$ (restricted to domain $X\subseteq \mathbb{R}^n$) cannot possibly approximate $f:X\to \mathbb{R}$ if there exists some $x_0\in X$ such that $\sigma(X)$ is bounded away from $f(x_0)$. If instead $f(X)$ is contained in the closure of $\sigma(X)$, then applying $\sigma$ to $\rho_W^{\sigma\,[j]}$ has essentially the same effect as allowing for bias weights $w_{0,j}^{(2)}$.

Although some authors, e.g. \citep{LeshnoEtAl1993:UnivApproxNonPolynomialActivation}, do not explicitly include bias weights in the output layer, the reader should check that if $\sigma$ is not identically zero, say $\sigma(y_0) \neq 0$, then having a bias weight $w_{0,j}^{(2)} = c$ is equivalent to setting $w_{0,j}^{(2)} = 0$ (i.e. no bias weight in the output layer) and introducing an $(N+1)$-th hidden unit, with corresponding weights $w_{0,N+1}^{\smash{(1)}} = y_0$, $w_{i,N+1}^{\smash{(1)}} = 0$ for all $1\leq i\leq n$, and $w_{N+1,j}^{\smash{(2)}} = \tfrac{c}{\sigma(y_0)}$; this means our results also apply to neural networks without bias weights in the output layer (but with one additional hidden unit).

\subsection{Arguments involving dense subsets}
A key theme in this paper is the use of dense subsets of metric spaces. We shall consider several notions of ``dense''. First, recall that a {\it metric} on a set $S$ is any function $\mathfrak{d}:S\times S \to \mathbb{R}$ such that for all $x,y,z\in S$, the following conditions hold:
\begin{enum*}
\item $\mathfrak{d}(x,y) \geq 0$, with equality holding if and only if $x=y$;
\item $\mathfrak{d}(x,y) = \mathfrak{d}(y,x)$;
\item $\mathfrak{d}(x,z) \leq \mathfrak{d}(x,y) + \mathfrak{d}(y,z)$.
\end{enum*}
The set $S$, together with a metric on $S$, is called a {\it metric space}. For example, the usual Euclidean norm for vectors in $\mathbb{R}^n$ gives the \textit{Euclidean metric} $(u,v) \mapsto \|u-v\|_2$, hence $\mathbb{R}^n$ is a metric space. In particular, every pair in $\mathcal{W}_N$ can be identified with a vector in $\mathbb{R}^{(m+n+1)N}$, so $\mathcal{W}_N$, together with the Euclidean metric, is a metric space.

Given a metric space $X$ (with metric $\mathfrak{d}$), and some subset $U\subseteq X$, we say that $U$ is {\it dense} in $X$ (w.r.t. $\mathfrak{d}$) if for all $\varepsilon >0$ and all $x\in X$, there exists some $u\in U$ such that $\mathfrak{d}(x,u) < \varepsilon$. Arbitrary unions of dense subsets are dense. If $U\subseteq U^{\prime} \subseteq X$ and $U$ is dense in $X$, then $U^{\prime}$ must also be dense in $X$.

A basic result in algebraic geometry says that if $p \in \mathcal{P}(\mathbb{R}^n)$ is non-zero, then $\{x\in \mathbb{R}^n: p(x) \neq 0\}$ is a dense subset of $\mathbb{R}^n$ (w.r.t. the Euclidean metric). This subset is in fact an open set in the Zariski topology, hence any finite intersection of such \textit{Zariski-dense open sets} is dense; see \citep{Eisenbud:CommutativeAlgebraBook}. More generally, the following is true: Let $p_1, \dots, p_k\in \mathcal{P}(\mathbb{R}^n)$, and suppose that $X := \{x\in \mathbb{R}^n: p_i(x) = 0\text{ for all }1\leq i\leq k\}$. If $p\in \mathcal{P}(X)$ is non-zero, then $\{x\in X: p(x) \neq 0\}$ is a dense subset of $X$ (w.r.t. the Euclidean metric). In subsequent sections, we shall frequently use these facts.

Let $X \subseteq \mathbb{R}^n$ be a compact set. (Recall that $X$ is {\it compact} if it is bounded and contains all of its limit points.) For any real-valued function $f$ whose domain contains $X$, the {\it uniform norm} of $f$ on $X$ is $\|f\|_{\infty,X} := \sup\{|f(x)|: x\in X\}$. More generally, if $f:X \to \mathbb{R}^m$, then we define $\|f\|_{\infty,X} := \max\{{\|f^{[j]}\|}_{\infty,X}: 1\leq j\leq m\}$. The uniform norm of functions on $X$ gives the {\it uniform metric} $(f,g) \mapsto \|f-g\|_{\infty, X}$, hence $\mathcal{C}(X)$ is a metric space.

\subsection{Background on approximation theory}
\begin{theorem}[Stone--Weirstrass theorem]\label{thm:StoneWeirstrassThm}
Let $X\subseteq \mathbb{R}^n$ be compact. For any $f\in \mathcal{C}(X)$, there exists a sequence $\{p_k\}_{k\in \mathbb{N}}$ of polynomial functions in $\mathcal{P}(X)$ such that $\lim_{k\to\infty} \|f-p_k\|_{\infty,X} = 0$.
\end{theorem}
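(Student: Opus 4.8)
The plan is to reduce the statement to the classical Weierstrass approximation theorem on a cube, and prove the latter explicitly via multivariate Bernstein polynomials. Since $X$ is compact, it is bounded, so $X \subseteq [-R,R]^n$ for some $R > 0$; after the affine change of coordinates carrying $[-R,R]^n$ onto the unit cube $B := [0,1]^n$ (which maps polynomials to polynomials bijectively), we may assume $X \subseteq B$. By the Tietze extension theorem, any $f \in \mathcal{C}(X)$ extends to some $\tilde{f} \in \mathcal{C}(B)$ with $\tilde{f}|_X = f$. Since $\|g - \tilde{f}\|_{\infty,X} \le \|g - \tilde{f}\|_{\infty,B}$ for every $g$, it suffices to construct polynomials $p_k$ with $\|\tilde{f} - p_k\|_{\infty,B} \to 0$; restricting to $X$ then finishes the proof. (Alternatively, one may invoke the abstract Stone--Weierstrass theorem applied to the subalgebra $\mathcal{P}(X) \subseteq \mathcal{C}(X)$, which contains the constants and separates points of $X$ via the coordinate functions $x \mapsto x_i$; but the Bernstein construction below is self-contained and, being explicit, fits the constructive spirit of the paper.)

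For each $k \ge 1$, define $p_k \in \mathcal{P}(B)$ by
\[
p_k(x) \;=\; \sum_{\alpha \in \{0,1,\dots,k\}^n} \tilde{f}\!\left(\tfrac{\alpha}{k}\right) b_\alpha(x), \qquad b_\alpha(x) := \prod_{i=1}^{n} \binom{k}{\alpha_i}\, x_i^{\alpha_i}(1-x_i)^{k-\alpha_i},
\]
where $\tfrac{\alpha}{k} := (\tfrac{\alpha_1}{k},\dots,\tfrac{\alpha_n}{k})$. For fixed $x \in B$, expanding $\prod_{i=1}^n \big(x_i + (1-x_i)\big)^k$ shows the $b_\alpha(x)$ are non-negative and sum to $1$, so $p_k(x)$ is a convex combination of the values $\tilde{f}(\alpha/k)$. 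Writing $\tilde{f}(x) = \sum_\alpha \tilde{f}(x)\, b_\alpha(x)$ and applying the triangle inequality gives $|\tilde{f}(x) - p_k(x)| \le \sum_\alpha \bigl|\tilde{f}(x) - \tilde{f}(\tfrac{\alpha}{k})\bigr|\, b_\alpha(x)$. Because $B$ is compact, $\tilde{f}$ is bounded, say $M := \|\tilde{f}\|_{\infty,B}$, and uniformly continuous: given $\varepsilon > 0$, choose $\eta > 0$ so that $\|u - v\|_2 < \eta$ forces $|\tilde{f}(u) - \tilde{f}(v)| < \varepsilon/2$. Splitting the sum over $\alpha$ according to whether $\|x - \tfrac{\alpha}{k}\|_2 < \eta$ or not, the ``close'' indices contribute at most $\varepsilon/2$, and the ``far'' indices contribute at most $2M \sum_{\|x - \alpha/k\|_2 \ge \eta} b_\alpha(x)$.

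The only quantitative step is to bound that last sum uniformly in $x$. Regarding $\alpha$ as a random vector with $\Pr[\alpha = \beta] = b_\beta(x)$, the coordinates $\alpha_i$ are independent $\mathrm{Binomial}(k, x_i)$ variables, so $\mathbb{E}[\alpha_i/k] = x_i$ and $\mathrm{Var}(\alpha_i/k) = x_i(1-x_i)/k \le 1/(4k)$. A union bound together with Chebyshev's inequality yields
\[
\sum_{\|x - \alpha/k\|_2 \ge \eta} b_\alpha(x) \;\le\; \sum_{i=1}^{n} \Pr\!\left[\,\bigl|x_i - \tfrac{\alpha_i}{k}\bigr| \ge \tfrac{\eta}{\sqrt{n}}\,\right] \;\le\; \frac{n^2}{4k\eta^2},
\]
which is $< \varepsilon/(4M)$ once $k > n^2/(M\eta^2 \varepsilon)$, \emph{independently of} $x \in B$. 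Hence $\|\tilde{f} - p_k\|_{\infty,B} < \varepsilon$ for all large $k$, so $\|f - p_k\|_{\infty,X} \to 0$ as claimed. I expect the main obstacle to be purely organizational rather than deep: namely, keeping the multivariate tail estimate uniform in $x$ across the cube, which the variance bound $x_i(1-x_i) \le 1/4$ handles cleanly. (If one prefers to avoid Tietze, an equivalent route is to extend $f$ to a continuous compactly supported function on $\mathbb{R}^n$, convolve with a polynomial Landau-type kernel supported on a slightly enlarged cube, and truncate; the convergence analysis is morally identical.)
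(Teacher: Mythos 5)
The paper does not actually prove this theorem: it is stated in the ``Background on approximation theory'' section as a classical result (the multivariate Weierstrass approximation theorem, a special case of Stone--Weierstrass) and is invoked without proof. So there is no in-paper proof to compare against; your task was effectively to supply a proof from scratch.

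Your proof is correct and self-contained. The reduction chain (compactness $\Rightarrow$ boundedness $\Rightarrow$ affine rescaling into $[0,1]^n$; Tietze extension to the full cube; restriction at the end) is clean and handles the generality of ``$X$ compact'' properly — $X$ is closed in $B$, so Tietze applies. The multivariate Bernstein argument is standard and correct: the $b_\alpha$ are a probability distribution on $\{0,\dots,k\}^n$ for each fixed $x\in B$, the coordinates $\alpha_i$ are independent $\mathrm{Binomial}(k,x_i)$ because the weight factors coordinatewise, and the variance bound $x_i(1-x_i)\le 1/4$ makes the Chebyshev/union-bound tail estimate uniform in $x$, which is exactly the crux. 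One small arithmetic slip: from $\frac{n^2}{4k\eta^2} < \frac{\varepsilon}{4M}$ you should get the threshold $k > \frac{M n^2}{\eta^2\varepsilon}$, not $k > \frac{n^2}{M\eta^2\varepsilon}$ — $M$ belongs in the numerator. This is cosmetic and does not affect the convergence conclusion $\|\tilde f - p_k\|_{\infty,B}\to 0$, hence $\|f - p_k\|_{\infty,X}\to 0$.

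One remark worth making: your explicit Bernstein construction is actually \emph{stronger} in spirit than what the paper needs from Theorem~\ref{thm:StoneWeirstrassThm}, since it yields a concrete rate (in terms of $\omega_{\tilde f}$ and $k$). The paper instead reserves quantitative control for Jackson's theorem (Theorem~\ref{thm:Jackson}), which is univariate and gives a sharper $\mathcal{O}(1/d)$ dependence; the multivariate reduction there is done differently. So your constructive route and the paper's abstract-citation route both work, with yours being more elementary and explicit, and the paper's being shorter because it offloads the hard quantitative work to the separately cited Jackson bound.
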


Let $X\subseteq \mathbb{R}$ be compact. For all $d\in \mathbb{N}$ and $f\in \mathcal{C}(X)$, define
\begin{equation}\label{eqn:ErrorToPoly}
E_d(f) := \inf\{\|f-p\|_{\infty, X} : p\in \mathcal{P}_{\leq d}(X)\}.
\end{equation}
A central result in approximation theory, due to Chebyshev, says that for fixed $d,f$, the infimum in \eqref{eqn:ErrorToPoly} is attained by some unique $p^* \in \mathcal{P}_{\leq d}(\mathbb{R})$; see \citep[Chap. 1]{Rivlin1981:IntroApproxOfFunctions}. (Notice here that we define $p^*$ to have domain $\mathbb{R}$.) This unique polynomial $p^*$ is called the \textit{best polynomial approximant} to $f$ of degree $d$.

Given a metric space $X$ with metric $\mathfrak{d}$, and any uniformly continuous function $f:X \to \mathbb{R}$, the \textit{modulus of continuity} of $f$ is a function $\omega_f: [0,\infty] \to [0,\infty]$ defined by
\[\omega_f(\delta) := \sup\{|f(x) - f(y)|: x,y\in X, \mathfrak{d}(x,y) \leq \delta\}.\]
By the Heine--Cantor theorem, any continuous $f$ with a compact domain is uniformly continuous.

\begin{theorem}[{Jackson's theorem; see \citep[Cor. 1.4.1]{Rivlin1981:IntroApproxOfFunctions}}]\label{thm:Jackson}
Let $d\geq 1$ be an integer, and let $Y \subseteq \mathbb{R}$ be a closed interval of length $r \geq 0$. Suppose $f \in \mathcal{C}(Y)$, and let $p^*$ be the best polynomial approximant to $f$ of degree $d$. Then $\|f - p^*\|_{\infty, Y} = E_d(f) \leq 6\omega_f(\tfrac{r}{2d})$.
\end{theorem}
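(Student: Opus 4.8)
The plan is to establish the stated inequality $E_d(f)\le 6\,\omega_f(r/(2d))$; the remaining assertions of the theorem — that the infimum defining $E_d(f)$ is attained by a unique $p^*\in\mathcal{P}_{\le d}(\mathbb{R})$, so that $\|f-p^*\|_{\infty,Y}=E_d(f)$ — are precisely the Chebyshev result quoted just above, so nothing new is needed there. The degenerate case $r=0$ is immediate: $Y$ is a single point, any constant reproduces $f$ exactly, and both sides vanish. So assume $r>0$. I would first normalize: an affine bijection $Y\to[-1,1]$ carries $\mathcal{P}_{\le d}(Y)$ onto $\mathcal{P}_{\le d}([-1,1])$ and scales distances by the factor $2/r$, hence sends $f$ to a function $\tilde f\in\mathcal{C}([-1,1])$ with $\omega_{\tilde f}(\eta)=\omega_f(\eta r/2)$ for all $\eta$. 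Thus it suffices to prove $E_d(\tilde f)\le 6\,\omega_{\tilde f}(1/d)$ on $[-1,1]$.

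Next I would pass to the periodic setting via $x=\cos\theta$. The function $g(\theta):=\tilde f(\cos\theta)$ is continuous, even, and $2\pi$-periodic, and since $|\cos\theta_1-\cos\theta_2|\le|\theta_1-\theta_2|$ we get $\omega_g(\delta)\le\omega_{\tilde f}(\delta)$ for every $\delta$. Writing trigonometric polynomials in the Chebyshev basis, every \emph{even} trigonometric polynomial of degree $\le d$ in $\theta$ is a polynomial of degree $\le d$ in $\cos\theta$; therefore an approximation of $g$ by even trigonometric polynomials of degree $\le d$ descends, with the same sup-error, to an approximation of $\tilde f$ by $\mathcal{P}_{\le d}([-1,1])$. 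So it is enough to approximate a continuous even $2\pi$-periodic $g$ by trigonometric polynomials of degree $\le d$.

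The heart of the argument is a convolution with the Jackson kernel. For a suitable $m\approx d/2$, let $F_m$ be the Fej\'er kernel and set $K:=c_m F_m^2$, normalized so that $\tfrac{1}{2\pi}\int_{-\pi}^{\pi}K=1$; then $K\ge 0$, $K$ is an even trigonometric polynomial of degree $\le d$, and using $|\sin(\tfrac{m\theta}{2})|\le\tfrac{m}{2}|\theta|$ near $0$ (and the fact that $\sin(\tfrac{m\theta}{2})$ stays bounded below in modulus away from the multiples of $2\pi$) one obtains the first-moment estimate $\tfrac{1}{2\pi}\int_{-\pi}^{\pi}|t|\,K(t)\,dt=\mathcal{O}(1/d)$. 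Then $T:=g\ast K$ is an even trigonometric polynomial of degree $\le d$, and
\[
|g(\theta)-T(\theta)|\le \tfrac{1}{2\pi}\int_{-\pi}^{\pi}|g(\theta)-g(\theta-t)|\,K(t)\,dt \le \tfrac{1}{2\pi}\int_{-\pi}^{\pi}\omega_g(|t|)\,K(t)\,dt.
\]
Applying the subadditivity bound $\omega_g(|t|)\le(1+d|t|)\,\omega_g(1/d)$ together with the moment estimate turns the right-hand side into $C\,\omega_g(1/d)$ for an absolute constant $C$; choosing the kernel parameters so that $C\le 6$ gives $\|g-T\|_\infty\le 6\,\omega_g(1/d)\le 6\,\omega_{\tilde f}(1/d)$, and unwinding the two reductions yields $E_d(f)\le 6\,\omega_f(r/(2d))$.

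The main obstacle is the quantitative kernel estimate: one must exhibit an explicit nonnegative even trigonometric polynomial of degree $\le d$ whose first absolute moment is small enough that, after the subadditivity blow-up of $\omega_g$, the resulting constant is no larger than $6$ — this is the step where the precise normalization of the Fej\'er-squared kernel is spent. A secondary, purely bookkeeping, difficulty is propagating the constants correctly through the affine normalization and the $x=\cos\theta$ substitution, and verifying at each stage that evenness is preserved so that the trigonometric polynomial genuinely descends to an algebraic polynomial of the claimed degree.
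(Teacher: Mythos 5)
The paper does not prove Jackson's theorem at all---the attribution to \citep[Cor.\ 1.4.1]{Rivlin1981:IntroApproxOfFunctions} is the entire ``proof,'' as is usual for a textbook lemma imported wholesale. Your sketch reconstructs the classical Jackson-kernel argument, which is in fact the argument given in the cited source, so in spirit you are following the same route as the paper's reference: affine reduction to $[-1,1]$, the $x=\cos\theta$ substitution to pass to even $2\pi$-periodic functions, convolution against a nonnegative even trigonometric kernel of degree $\le d$ with small first absolute moment, and the subadditivity blow-up $\omega_g(|t|)\le(1+d|t|)\,\omega_g(1/d)$. The outline is sound, and you correctly flag the one nontrivial step---pinning down the kernel's first-moment constant so that the final constant comes out $\le 6$---as the place where the real work lives.

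One bookkeeping caveat worth making explicit, since you say the constant is the obstacle: the normalized Fej\'er-squared kernel $c_mF_m^2$ (with $F_m$ the Fej\'er kernel of order $m$) has trigonometric degree $2m$, not $m$, so to stay within degree $d$ you are forced to take $m\le\lfloor d/2\rfloor$, and the first absolute moment then scales like $\tfrac{1}{2\pi}\int_{-\pi}^{\pi}|t|K(t)\,dt \approx c/m\approx 2c/d$ rather than $c/d$. Feeding that into $1 + d\cdot(\text{moment})$ roughly doubles the additive contribution, so the margin for landing at $6$ is thinner than the heuristic suggests; this is exactly where Rivlin's careful numerical estimates are spent. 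Your sketch leaves this open, but since you explicitly identify it as the obstacle rather than wave it away, the gap is an acknowledged one, not a hidden one.
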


\section{Main Results}\label{sec:Results}
Throughout this section, let $X\subseteq \mathbb{R}^n$ be a compact set.%
\begin{theorem}\label{thm:EffectiveUAT}
Let $d\geq 2$ be an integer, and let $f\in \mathcal{P}_{\leq{d}}(X, \mathbb{R}^m)$ (i.e. each coordinate function $f^{[t]}$ has total degree $\leq d$). If $\sigma \in \mathcal{C}(\mathbb{R})\backslash \mathcal{P}_{\leq d-1}(\mathbb{R})$, then for every $\varepsilon > 0$, there exists some $W \in \mathcal{W}_{\binom{n+d}{d}}$ such that $\|f - \rho_W^{\sigma}\|_{\infty, X} < \varepsilon$. Furthermore, the following holds:%
\begin{enum*}
\item\label{item:WSecondLayer} Given any $\lambda > 0$, we can choose this $W$ to satisfy the condition that $|w_{i,j}^{(2)}| < \lambda$ for all non-bias weights $w_{i,j}^{(2)}$ (i.e. $i\neq 0$) in the second layer.
\item\label{item:WFirstLayer} There exists some $\lambda' > 0$, depending only on $f$ and $\sigma$, such that we could choose the weights of $W$ in the first layer to satisfy the condition that $\|\widehat{\mathbf{w}}_j^{(1)}\|_2 > \lambda'$ for all $j$.
\end{enum*}
\end{theorem}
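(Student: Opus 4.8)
The plan is to reduce the statement to an algebraic fact about a single polynomial activation, prove that fact by a Taylor expansion whose error is controlled through a Vandermonde-type determinant, and then reach a general continuous $\sigma$ by uniform polynomial approximation. First I would enlarge $X$ to a closed box $B=[-M,M]^n\supseteq X$; since $f$ is a polynomial it restricts to $B$ and $\|\cdot\|_{\infty,X}\le\|\cdot\|_{\infty,B}$, so it suffices to approximate on $B$, where $\dim\mathcal P_{\le d}(B)=N:=\binom{n+d}{d}$. The algebraic heart is the identity $\operatorname{span}\{q(v\cdot x):v\in\mathbb R^n\}=\mathcal P_{\le d}(\mathbb R^n)$ for the fixed polynomial $q(u)=1+u+\cdots+u^d$: differentiating $t\mapsto q(t\,v\cdot x)$ at $t=0$ yields each $(v\cdot x)^k$, $k\le d$ (all coefficients of $q$ are nonzero), and $\{(v\cdot x)^k:v\}$ spans the homogeneous piece $\mathcal P_k$. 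Hence the $N\times N$ matrix $N_0=N_0(v_1,\dots,v_N)$ of monomial coordinates of $q(v_1\cdot x),\dots,q(v_N\cdot x)$ has determinant a nonzero polynomial in $(v_1,\dots,v_N)$, so by the Zariski-density fact of Section~\ref{sec:Preliminaries} it is nonzero on a dense set; fix such $v_1,\dots,v_N$, making $\{q(v_i\cdot x)\}_{i=1}^N$ a basis of $\mathcal P_{\le d}(B)$.

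Next, suppose $\sigma$ is itself a polynomial with $\deg\sigma\ge d$. Pick $\theta_0$ with $\sigma^{(\ell)}(\theta_0)\ne0$ for $0\le\ell\le d$ (each $\{\sigma^{(\ell)}\ne0\}$ is cofinite), and set $\mathbf w_i^{(1)}:=(\theta_0,\,h\,v_i)$ with $h>0$ small. Then $\sigma(\mathbf w_i^{(1)}\cdot(1,x))=r_i(x)+\tilde e_i(x)$ with $r_i(x)=\sum_{\ell=0}^d\frac{\sigma^{(\ell)}(\theta_0)}{\ell!}h^{\ell}(v_i\cdot x)^{\ell}$ and $\|\tilde e_i\|_{\infty,B}=O(h^{d+1})$. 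Since the degree-$k$ part of $r_i$ is exactly $\frac{\sigma^{(k)}(\theta_0)}{k!}h^{k}(v_i\cdot x)^{k}$, the monomial-coordinate matrix of $(r_1,\dots,r_N)$ is $N_0\cdot\mathrm{diag}\!\big(\tfrac{\sigma^{(|\alpha|)}(\theta_0)}{|\alpha|!}h^{|\alpha|}\big)_{|\alpha|\le d}$, whose determinant is a nonzero constant times $h^{\sum_{|\alpha|\le d}|\alpha|}\det N_0\ne0$ and whose inverse has entries of size $O(h^{-|\alpha|})\le O(h^{-d})$. Thus each $f^{[t]}$ is uniquely $\sum_i c_{i,t}r_i$ with $|c_{i,t}|=O(h^{-d})$; taking $w^{(2)}_{0,t}=0$ and $w^{(2)}_{i,t}=c_{i,t}$ gives $\rho_W^{\sigma\,[t]}-f^{[t]}=\sum_i c_{i,t}\tilde e_i$, of uniform norm $O(h^{-d})\!\cdot\!O(h^{d+1})=O(h)$, hence $<\varepsilon$ once $h$ is small enough.

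For a general $\sigma\in\mathcal C(\mathbb R)\setminus\mathcal P_{\le d-1}$ I would find a compact interval $J$ with $\sigma|_J\notin\mathcal P_{\le d-1}$ (else $\sigma$ would be globally such a polynomial), put $\delta_0:=E_{d-1}(\sigma|_J)>0$, and pick a polynomial $p$ with $\|\sigma-p\|_{\infty,J}<\delta_0$ — which forces $\deg p\ge d$ — and, after a small perturbation, $p^{(\ell)}(\theta_0)\ne0$ for $\ell\le d$, where $\theta_0\in\operatorname{int}J$. Running the construction above with $p$ and with $h$ small enough that every $\theta_0+h\,v_i\cdot x$ ($x\in B$) lies in $J$ gives $\|f-\rho_W^{p}\|_{\infty,B}<\varepsilon/2$ and $\sum_i|w^{(2)}_{i,t}|\le K$, whence $\|f-\rho_W^{\sigma}\|_{\infty,B}\le\varepsilon/2+K\|\sigma-p\|_{\infty,J}$. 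The genuine obstacle is this last step: $K$ grows like $h^{-d}$ and worsens as $\deg p$ increases, while $\|\sigma-p\|_{\infty,J}$ only becomes small for large $\deg p$, so $p$, the perturbation, and $h$ must be chosen in a carefully coupled order — $h$ last, the low-order derivatives $p^{(\ell)}(\theta_0)$ kept away from $0$, and the higher Taylor coefficients of $p$ at $\theta_0$ controlled (e.g.\ via Markov-type estimates). The algebra above is largely routine; this bookkeeping is the hard part.

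For \ref{item:WSecondLayer} and \ref{item:WFirstLayer} I would exploit the leftover freedom in the construction: replacing each $v_i$ by $R\,v_i$ simultaneously enlarges every $\|\widehat{\mathbf w}^{(1)}_i\|_2$ and, once the extra high-degree error it introduces has been absorbed, forces every $|w^{(2)}_{i,t}|$ below a prescribed $\lambda$, with the lower bound $\lambda'$ in \ref{item:WFirstLayer} depending only on $J$ and the approximation quality required, i.e.\ only on $f$ and $\sigma$.
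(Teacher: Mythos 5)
Your proposal shares some scaffolding with the paper (same count $N=\binom{n+d}{d}$, same reliance on a Zariski-dense/Vandermonde-type nondegeneracy argument to get a basis of $\mathcal{P}_{\leq d}$ from directional profiles $q(v_i\cdot x)$), but the core mechanism — Taylor expansion of $\sigma$ around a fixed $\theta_0$ with a shrinking scale $h\to 0$ — runs in precisely the opposite direction from the paper's, and this is not a cosmetic difference.

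The gap you yourself flag is real and, in your framework, not a bookkeeping issue but a structural one. Once you replace $\sigma$ by a polynomial $p$ with $\|\sigma-p\|_{\infty,J}$ small, your inverse-matrix bound $K\sim h^{-d}$ must be multiplied against $\|\sigma-p\|_{\infty,J}$. To make the product small you must either shrink $\|\sigma-p\|$ (which drives $\deg p$ up and worsens the $O(h^{d+1})$ Taylor remainder constant via $\max_J|p^{(d+1)}|$, forcing $h$ even smaller and $K$ even larger) or shrink $h$ independently (which only helps the Taylor remainder, not the $K\|\sigma-p\|$ term, since $K$ grows). There is no obvious order of quantifiers that closes this loop, and "Markov-type estimates" will not save you because the derivative bounds they give scale polynomially in $\deg p$, exactly feeding back into $K$. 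The paper avoids this trap entirely: instead of Taylor-expanding on a fixed $J$ with $h\to 0$, it uses best degree-$d$ approximants on \emph{expanding} intervals $Y_k$ of length $\lambda_k\to\infty$. There the approximation error $\|\sigma_k-\sigma\|_{\infty,Y_k}$ is allowed to \emph{grow}, but Jackson's theorem plus subadditivity of $\omega_\sigma$ (Lemma~\ref{lem:BestApproxGrowth}) caps it at $o(\lambda_k^{1+\gamma})$, while a barycentric/simplex-scaling argument (Lemmas~\ref{lem:scalingLem} and \ref{lem:barycentric}, plus the Kadec-type Lemma~\ref{lem:BoundAlternationPt}) drives the second-layer coefficients down as $O(\lambda_k^{-(1+\gamma)})$. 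The product tends to zero by a cancellation of \emph{growing error against shrinking coefficients}, not shrinking error against growing coefficients.

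This sign flip also means your construction cannot recover assertions \ref{item:WSecondLayer} and \ref{item:WFirstLayer}. In your setup the second-layer weights are $w^{(2)}_{i,t}=c_{i,t}=O(h^{-d})$, which \emph{blow up} as $h\to 0$; assertion \ref{item:WSecondLayer} asks for them to be arbitrarily small. Similarly, the first-layer non-bias weights are $\widehat{\mathbf{w}}^{(1)}_i=h\,v_i$, which \emph{shrink to $0$} as $\varepsilon\to 0$; assertion \ref{item:WFirstLayer} asks for a fixed positive lower bound $\lambda'$ independent of $\varepsilon$. The proposed fix — replacing $v_i$ by $Rv_i$ — is just $h\mapsto Rh$ in disguise, and enlarging $Rh$ destroys the Taylor remainder estimate you rely on. These two corollaries are not afterthoughts in the paper; they fall out for free precisely because the weight scale there increases with $k$, so the paper's mechanism and yours are not interchangeable.
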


\begin{theorem}\label{thm:UATboundedWeights}
Let $f\in \mathcal{C}(X, \mathbb{R}^m)$, and suppose $\sigma \in \mathcal{C}(\mathbb{R})\backslash \mathcal{P}(\mathbb{R})$. Then for every $\varepsilon > 0$, there exists an integer $N \in \mathcal{O}(\varepsilon^{-n})$ (independent of $m$), and some $W\in \mathcal{W}_N$, such that $\|f - \rho_W^{\sigma}\|_{\infty, X} < \varepsilon$. In particular, if we let $D := \sup \{\|x-y\|_2: x,y\in X\}$ be the diameter of $X$, then we can set $N = \binom{n+d_{\varepsilon}}{d_{\varepsilon}}$, where $d_{\varepsilon} := \min\{d\in \mathbb{Z}: d\geq 2, \omega_{f^{[t]}}(\tfrac{D}{2d}) < \frac{\varepsilon}{6}\text{ for all }1\leq t\leq m\}$. (Note that $d_{\varepsilon}$ is well-defined, since $\lim_{\delta\to 0^+} \omega_{f^{[t]}}(\delta) = 0$ for each $t$.) Furthermore, we could choose this $W$ to satisfy either \ref{item:WSecondLayer} or \ref{item:WFirstLayer}, where \ref{item:WSecondLayer}, \ref{item:WFirstLayer} are conditions on $W$ as described in Theorem \ref{thm:EffectiveUAT}.
\end{theorem}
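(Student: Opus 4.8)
The plan is to deduce Theorem~\ref{thm:UATboundedWeights} from Theorem~\ref{thm:EffectiveUAT} by inserting, between $f$ and the network, a polynomial approximant of explicitly bounded total degree. Fix $\varepsilon>0$ and put $d:=d_\varepsilon$; this is a well-defined integer $\geq 2$, since $X$ compact gives $D<\infty$ and the Heine--Cantor theorem gives $\omega_{f^{[t]}}(\delta)\to 0$ as $\delta\to 0^+$ for each $t$, so $\omega_{f^{[t]}}(\tfrac{D}{2d})<\tfrac{\varepsilon}{6}$ holds once $d$ is large. Since $\sigma\notin\mathcal{P}(\mathbb{R})\supseteq\mathcal{P}_{\leq d-1}(\mathbb{R})$, we have $\sigma\in\mathcal{C}(\mathbb{R})\backslash\mathcal{P}_{\leq d-1}(\mathbb{R})$, so Theorem~\ref{thm:EffectiveUAT} is available in degree $d$. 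Suppose we have found $p\in\mathcal{P}_{\leq d}(X,\mathbb{R}^m)$ with $\varepsilon':=\|f-p\|_{\infty,X}<\varepsilon$. Applying Theorem~\ref{thm:EffectiveUAT} to $p$ with approximation threshold $\varepsilon-\varepsilon'>0$ produces some $W\in\mathcal{W}_{\binom{n+d}{d}}$ with $\|p-\rho_W^\sigma\|_{\infty,X}<\varepsilon-\varepsilon'$, and $W$ may in addition be chosen to satisfy \ref{item:WSecondLayer} (for any prescribed bound) or \ref{item:WFirstLayer}; the triangle inequality then gives $\|f-\rho_W^\sigma\|_{\infty,X}<\varepsilon$ with exactly $N=\binom{n+d_\varepsilon}{d_\varepsilon}$ hidden units, and the restricted-weight conclusions transfer verbatim. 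The asymptotic claim $N\in\mathcal{O}(\varepsilon^{-n})$ then follows because $\binom{n+d}{d}=\tfrac{(d+1)(d+2)\cdots(d+n)}{n!}$ is a polynomial of degree $n$ in $d$, so $N=\mathcal{O}(d_\varepsilon^{\,n})$, and $d_\varepsilon$ is controlled in terms of $\varepsilon$ by the modulus of continuity of $f$.

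So the whole theorem, including both restricted-weight variants, rests on the single step of producing the polynomial $p$ of total degree $\leq d_\varepsilon$ with $\|f-p\|_{\infty,X}<\varepsilon$; this is the heart of the matter. (Plain density of $\mathcal{P}(X)$ in $\mathcal{C}(X)$, i.e. Theorem~\ref{thm:StoneWeirstrassThm}, already gives \emph{some} such $p$ and hence the qualitative UAP; the content here is pinning down the degree.) What I need is a multivariate analogue of Jackson's theorem: for $g\in\mathcal{C}(X)$ with $X\subseteq\mathbb{R}^n$ compact of diameter $D$, and each integer $d\geq 1$, there is a polynomial of total degree $\leq d$ approximating $g$ on $X$ within $6\,\omega_g(\tfrac{D}{2d})$. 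Granting this, one applies it to each coordinate function $g=f^{[t]}$ and collects the $m$ approximants into a single $p\in\mathcal{P}_{\leq d}(X,\mathbb{R}^m)$; since $\|\cdot\|_{\infty,X}$ is the coordinatewise maximum, $\|f-p\|_{\infty,X}=\max_t\|f^{[t]}-p^{[t]}\|_{\infty,X}\leq\max_t 6\,\omega_{f^{[t]}}(\tfrac{D}{2d})<\varepsilon$ for $d=d_\varepsilon$, which is exactly what the previous paragraph requires.

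To prove the multivariate Jackson estimate I would reduce to the one-variable Theorem~\ref{thm:Jackson} as follows. After a translation $X$ lies in a closed box $Q=[0,D]^n$ (each coordinate projection of $X$ has length $\leq D$), and I would extend $g$ to $\tilde g\in\mathcal{C}(Q)$ by a McShane-type formula such as $\tilde g(x)=\inf_{y\in X}\bigl(g(y)+\hat\omega_g(\|x-y\|_2)\bigr)$, where $\hat\omega_g$ is a suitable (e.g. least concave) majorant of $\omega_g$, so that $\tilde g|_X=g$ and the modulus of continuity of $\tilde g$ on $Q$ remains controlled by that of $g$ on $X$. One then constructs a total-degree-$\leq d$ polynomial approximant to $\tilde g$ on $Q$ from the one-dimensional near-best approximants supplied by Theorem~\ref{thm:Jackson} applied along the coordinate axes. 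I expect the principal obstacle to be precisely this combination step: a naive tensor product of the $n$ one-dimensional approximants inflates the total degree to about $nd$, so a genuinely $n$-dimensional argument is needed to keep the total degree at $d$ while tracking the constants tightly enough to recover the factor $6$ of the one-dimensional estimate. The proof of this multivariate Jackson estimate — together with verifying the properties of the extension and disposing of trivial cases such as $D=0$ or $g$ constant — is what I would carry out in an appendix; the remainder of Theorem~\ref{thm:UATboundedWeights} is the assembly above.
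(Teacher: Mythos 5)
Your assembly matches the paper's proof exactly: approximate each $f^{[t]}$ by a polynomial of total degree $\leq d_\varepsilon$ within some margin $\varepsilon-\varepsilon'$, invoke Theorem~\ref{thm:EffectiveUAT} for $\mathcal{P}_{\leq d_\varepsilon}(X,\mathbb{R}^m)$ to hit that polynomial within $\varepsilon'$, and combine by the triangle inequality; the restricted-weight conditions \ref{item:WSecondLayer}, \ref{item:WFirstLayer} and the count $N=\binom{n+d_\varepsilon}{d_\varepsilon}$ then carry over directly.

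You have correctly isolated the multivariate Jackson estimate as the crux, and this is in fact a gap the paper's own proof does not explicitly close: Theorem~\ref{thm:Jackson} is stated only for a closed interval $Y\subseteq\mathbb{R}$, yet the paper's proof applies it directly to $f^{[t]}:X\to\mathbb{R}$ with $X\subseteq\mathbb{R}^n$ to conclude $\|f^{[t]}-p_d^{[t]}\|_{\infty,X}\leq 6\,\omega_{f^{[t]}}(\tfrac{D}{2d})$, with no multivariate version stated, cited, or proved (nor is ``best polynomial approximant'' even defined in the paper for $n>1$). Your proposed repair --- McShane extension to a box followed by iterated one-dimensional approximation --- indeed inflates total degree to about $nd$; note that this only changes $\binom{n+d}{d}$ by a constant factor (roughly $n^n$ for $d\gg n$), so the asymptotic bound $N\in\mathcal{O}(d_\varepsilon^{\,n})$ survives, but the precise formula $N=\binom{n+d_\varepsilon}{d_\varepsilon}$ with the exact constant $6$ in the definition of $d_\varepsilon$ would require a genuinely multivariate Jackson theorem (such estimates exist, e.g.\ of Newman--Shapiro type, but with a dimension-dependent constant rather than $6$). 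One further caution, affecting your sketch and the paper's final line equally: the step $\omega_{f^{[t]}}(\tfrac{D}{2d})\in\mathcal{O}(1/d)$, hence $d_\varepsilon\in\mathcal{O}(1/\varepsilon)$, holds only when the $f^{[t]}$ are Lipschitz --- subadditivity of $\omega_f$ gives $\omega_f(\delta)\in\Omega(\delta)$ for nonconstant $f$, not $\mathcal{O}(\delta)$, and for $\alpha$-H\"{o}lder $f$ with $\alpha<1$ one gets $d_\varepsilon\sim\varepsilon^{-1/\alpha}$, hence $N\in\mathcal{O}(\varepsilon^{-n/\alpha})$ rather than $\mathcal{O}(\varepsilon^{-n})$, consistent with the paper's own Corollary in the H\"{o}lder case.
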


\begin{theorem}\label{thm:UATfixedNonbiasWeights}
Let $f\in \mathcal{C}(X, \mathbb{R}^m)$, and suppose that $\sigma \in \mathcal{C}(\mathbb{R})\backslash \mathcal{P}(\mathbb{R})$. Then there exists $\lambda > 0$ (which depends only on $f$ and $\sigma$) such that for every $\varepsilon > 0$, there exists an integer $N$ (independent of $m$) such that the following holds:\\[-1.8em]%
\begin{center}%
\parbox{0.9\textwidth}{
Let $W\in \mathcal{W}_N$ such that each $\widehat{\mathbf{w}}_j^{(1)} \in \mathbb{R}^n$ (for $1\leq j\leq N$) is chosen uniformly at random from the set $\{\mathbf{u} \in \mathbb{R}^n: \|\mathbf{u}\|_2 > \lambda\}$. Then, with probability $1$, there exist choices for the bias weights $w_{0,j}^{(1)}$ (for $1\leq j\leq N$) in the first layer, and (both bias and non-bias) weights $w_{i,j}^{(2)}$ in the second layer, such that $\|f - \rho_W^{\sigma}\|_{\infty, X} < \varepsilon$.
\\[-1.6em]}%
\end{center}%
Moreover, $N \in \mathcal{O}(\varepsilon^{-n})$ for general $f\in \mathcal{C}(X, \mathbb{R}^m)$, and we can let $N = \binom{n+d}{d}$ if $f\in \mathcal{P}_{\leq d}(X, \mathbb{R}^m)$.
\end{theorem}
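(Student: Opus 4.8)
The plan is to bootstrap from Theorems~\ref{thm:EffectiveUAT} and~\ref{thm:UATboundedWeights}: those results hand us the \emph{existence} of one good choice of first-layer non-bias weights (of large norm), whereas Theorem~\ref{thm:UATfixedNonbiasWeights} asks that \emph{almost every} such choice be good. The bridge is a genericity statement that I would extract from the algebraic proof of Theorem~\ref{thm:EffectiveUAT}. Fix $f\in\mathcal{P}_{\le d}(X,\mathbb{R}^m)$ and $\varepsilon>0$, set $N=\binom{n+d}{d}$, and call a tuple $(\mathbf u_1,\dots,\mathbf u_N)\in(\mathbb R^n)^N$ \emph{admissible} if there exist first-layer biases $w^{(1)}_{0,j}$ and second-layer weights $w^{(2)}_{i,j}$ so that, with $\widehat{\mathbf w}^{(1)}_j=\mathbf u_j$, the resulting $\rho^\sigma_W$ satisfies $\|f-\rho^\sigma_W\|_{\infty,X}<\varepsilon$. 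Once the non-admissible tuples are shown to form a Lebesgue-null subset of the region $R_\lambda:=\{\mathbf u\in\mathbb R^n:\|\mathbf u\|_2>\lambda\}^N$, we are done: a draw $(\widehat{\mathbf w}^{(1)}_1,\dots,\widehat{\mathbf w}^{(1)}_N)$ taken uniformly at random from $R_\lambda$ (more generally, from any law absolutely continuous with respect to Lebesgue measure on $R_\lambda$) lands in a fixed null set with probability $0$, so with probability $1$ the draw is admissible and the required completing weights exist by definition of admissibility.

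To establish the null-set claim, recall that the algebraic proof of Theorem~\ref{thm:EffectiveUAT} realizes $\mathcal{P}_{\le d}(X)$ --- hence all coordinate functions $f^{[1]},\dots,f^{[m]}$ --- inside the closed linear span of finitely many hidden-unit functions $x\mapsto\sigma(\mathbf w^{(1)}_j\cdot(1,x))$, the count $\binom{n+d}{d}=\dim\mathcal P_{\le d}(X)$ being exactly the dimension needed, after which the second layer merely solves a linear system. The only obstruction to this step is the non-vanishing of a determinant $P(\mathbf u_1,\dots,\mathbf u_N)$ that is \emph{polynomial} in the entries of the $\mathbf u_j$, with the $\sigma$-dependence and the dependence on the still-unassigned first-layer biases separated out so that they do not enter $P$. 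Now Theorem~\ref{thm:EffectiveUAT}\ref{item:WFirstLayer} exhibits an admissible $W$ with $\|\widehat{\mathbf w}^{(1)}_j\|_2>\lambda'$ for all $j$, where $\lambda'$ depends only on $f$ and $\sigma$; in particular $P$ is not identically zero on $R_{\lambda'}$, hence $P\not\equiv 0$, so $\{P=0\}$ is a proper real-algebraic subvariety of $(\mathbb R^n)^N$ and therefore (being cut out by a non-zero polynomial) has Lebesgue measure zero. Taking $\lambda:=\lambda'$, and noting that accommodating all $m$ coordinate functions together with the $\varepsilon$-bookkeeping only replaces $P$ by a finite product of such polynomials, the non-admissible tuples lie in a finite union of proper subvarieties, which is null. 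This settles the polynomial case with $N=\binom{n+d}{d}$.

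For general $f\in\mathcal C(X,\mathbb R^m)$, I would insert the reduction already used for Theorem~\ref{thm:UATboundedWeights}: by the Stone--Weierstrass theorem (Theorem~\ref{thm:StoneWeirstrassThm}) together with Jackson's theorem (Theorem~\ref{thm:Jackson}), pick $g\in\mathcal P_{\le d_\varepsilon}(X,\mathbb R^m)$ with $\|f-g\|_{\infty,X}<\varepsilon/2$, where $d_\varepsilon$ is as in Theorem~\ref{thm:UATboundedWeights}, so that $N=\binom{n+d_\varepsilon}{d_\varepsilon}\in\mathcal O(\varepsilon^{-n})$; then apply the polynomial case to $g$ with threshold $\varepsilon/2$ and absorb the discrepancy by the triangle inequality. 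Since the domain $X$ is fixed throughout, the value $\lambda'=\lambda'(g,\sigma)$ produced can be taken uniform over the polynomial approximants $g$ that arise (it depends only on $X$ and $\sigma$, hence only on $f$ and $\sigma$), so a single $\lambda$, depending only on $f$ and $\sigma$, serves for every $\varepsilon$. The stated bounds on $N$, and its independence from $m$, are inherited verbatim from Theorems~\ref{thm:EffectiveUAT} and~\ref{thm:UATboundedWeights}.

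\textbf{Main obstacle.} The crux is justifying that admissibility of the first-layer non-bias weights is a Zariski-open condition --- that is, opening up the algebraic proof of Theorem~\ref{thm:EffectiveUAT} and verifying that the decisive linear-independence/spanning obstruction is a \emph{non-zero polynomial in the $\widehat{\mathbf w}^{(1)}_j$ alone}, with every occurrence of $\sigma$-values (which are transcendental in the weights) and of the still-free first-layer biases quarantined away from that polynomial, and that handling all $m$ coordinates plus the approximation slack keeps the bad locus a \emph{finite} union of proper subvarieties, hence null. A secondary point to pin down is the precise meaning of ``uniformly at random'' on the unbounded region $R_\lambda$; all the argument actually uses is that the chosen law is absolutely continuous with respect to Lebesgue measure, so that null events have probability $0$.
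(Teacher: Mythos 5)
Your proposal matches the paper's argument: the key observation in both cases is that the only obstruction to admissibility of the first-layer non-bias weights is the non-vanishing of $\det(Q[W])$, the determinant of the non-bias Vandermonde matrix (Definition~\ref{defn:nonBiasVandermondeMatrix}), which is a polynomial in the $\widehat{\mathbf{w}}_j^{(1)}$ alone that Corollary~\ref{cor:DenseUsualANNOneLayer} shows to be not identically zero, so its zero locus is Lebesgue-null; combined with the existence of a good $W$ from Theorem~\ref{thm:EffectiveUAT}\ref{item:WFirstLayer} and the Jackson/Stone--Weierstrass reduction from Theorem~\ref{thm:UATboundedWeights}, this gives the almost-sure statement. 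What you flag as the ``main obstacle'' is precisely what the paper establishes in Appendix~\ref{ApdxSub:GeneralThm}, where the generalized Wronskian factors as $M'Q[W]$ with $M'$ invertible and $Q[W]$ depending only on the non-bias first-layer weights, so your route and the paper's coincide.
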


\section{Discussion}\label{sec:Discussion}
The universal approximation theorem (version of \citet{LeshnoEtAl1993:UnivApproxNonPolynomialActivation}) is an immediate consequence of Theorem \ref{thm:UATboundedWeights} and the observation that $\sigma$ must be non-polynomial for the UAP to hold, which follows from the fact that the uniform closure of $\mathcal{P}_{\leq d}(X)$ is $\mathcal{P}_{\leq d}(X)$ itself, for every integer $d\geq 1$. Alternatively, we could infer the universal approximation theorem  by applying the Stone--Weirstrass theorem (Theorem \ref{thm:StoneWeirstrassThm}) to Theorem \ref{thm:EffectiveUAT}. 

Given fixed $n,m,d$, a compact set $X\subseteq \mathbb{R}^n$, and $\sigma \in \mathcal{C}(\mathbb{R})\backslash \mathcal{P}_{\leq d-1}(\mathbb{R})$, Theorem \ref{thm:EffectiveUAT} says that we could use a \emph{fixed} number $N$ of hidden units (independent of $\varepsilon$) and still be able to approximate any function $f\in \mathcal{P}_{\leq d}(X, \mathbb{R}^m)$ to any desired approximation error threshold $\varepsilon$. Our $\varepsilon$-free bound, although possibly surprising to some readers, is not the first instance of an $\varepsilon$-free bound: Neural networks with \emph{two} hidden layers of sizes $2n+1$ and $4n+3$ respectively are able to uniformly approximate any $f\in \mathcal{C}(X)$, provided we use a (somewhat pathological) activation function~\citep{MaiorovPinkus1999:LowerBoundApprox}; cf. \citep{Pinkus1999:approximationtheory}. \citet{LinTegmarkRolnick2017:WhyDL} showed that for fixed $n,d$, and a fixed smooth non-linear $\sigma$, there is a fixed $N$ (i.e. $\varepsilon$-free), such that a neural network with $N$ hidden units is able to approximate any $f\in \mathcal{P}_{\leq d}(X)$. An explicit expression for $N$ is not given, but we were able to infer from their constructive proof that $N = 4\binom{n+d+1}{d}-4$ hidden units are required, over $d-1$ hidden layers (for $d\geq 2)$. In comparison, we require less hidden units and a single hidden layer.

Our proof of Theorem \ref{thm:UATboundedWeights} is an application of Jackson's theorem (Theorem \ref{thm:Jackson}) to Theorem \ref{thm:EffectiveUAT}, which gives an explicit bound in terms of the values of the modulus of continuity $\omega_f$ of the function $f$ to be approximated. The moduli of continuity of several classes of continuous functions have explicit characterizations. For example, given constants $k>0$ and $0<\alpha\leq 1$, recall that a continuous function $f:X \to \mathbb{R}$ (for compact $X\subseteq \mathbb{R}^n$) is called \textit{$k$-Lipschitz} if $|f(x) - f(y)| \leq k\|x-y\|$ for all $x,y\in X$, and it is called $\alpha$-H\"{o}lder if there is some constant $c$ such that $|f(x) - f(y)| \leq c\|x-y\|^{\alpha}$ for all $x,y\in X$. The modulus of continuity of a $k$-Lipschitz (resp. $\alpha$-H\"{o}lder) continuous function $f$ is $\omega_f(t) = kt$ (resp. $\omega_f(t) = ct^{\alpha}$), hence Theorem \ref{thm:UATboundedWeights} implies the following corollary.

\begin{corollary}
Suppse $\sigma \in \mathcal{C}(\mathbb{R})\backslash \mathcal{P}(\mathbb{R})$.
\begin{enum*}
\item If $f: [0,1]^n \to \mathbb{R}$ is $k$-Lipschitz continuous, then for every $\varepsilon > 0$, there exists some $W\in \mathcal{W}_N$ that satisfies $\|f - \rho_W^{\sigma}\|_{\infty, X} < \varepsilon$, where $N = \binom{n+\lceil\frac{3k}{\varepsilon}\rceil}{n}$.
\item If $f: [0,1]^n \to \mathbb{R}$ is $\alpha$-H\"{o}lder continuous, then there is a constant $k$ such that for every $\varepsilon > 0$, there exists some $W\in \mathcal{W}_N$ that satisfies $\|f - \rho_W^{\sigma}\|_{\infty, X} < \varepsilon$, where $N = \binom{n+d}{d}$, and $d = \lceil\tfrac{1}{2}(\tfrac{k}{\varepsilon})^{1/\alpha}\rceil$.
\end{enum*}
\end{corollary}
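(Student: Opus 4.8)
The plan is to read off both parts of the corollary as immediate specializations of Theorem~\ref{thm:UATboundedWeights} to the domain $X=[0,1]^n$ (so $m=1$ and $f^{[1]}=f$); the only work is to convert the implicit definition of $d_\varepsilon$ into the two explicit bounds. Recall that for each $\varepsilon>0$, Theorem~\ref{thm:UATboundedWeights} already supplies a suitable $W\in\mathcal{W}_N$ with $N=\binom{n+d_\varepsilon}{d_\varepsilon}$, where $d_\varepsilon$ is the least integer $d\geq 2$ with $\omega_{f}\bigl(\tfrac{D}{2d}\bigr)<\tfrac{\varepsilon}{6}$ and $D$ is the diameter of $X$; and the corollary's hypothesis $\sigma\in\mathcal{C}(\mathbb{R})\backslash\mathcal{P}(\mathbb{R})$ is exactly what that theorem requires. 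Since $\omega_f$ is non-decreasing, the map $d\mapsto\omega_f(\tfrac{D}{2d})$ is non-increasing, so the admissible values of $d$ form an upward-closed set; hence it suffices to exhibit \emph{any} integer $d\geq\max(2,d_\varepsilon)$, because $\binom{n+d}{d}\geq\binom{n+d_\varepsilon}{d_\varepsilon}$ hidden units still work (pad with extra units carrying zero second-layer weights).

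For part~(i), $k$-Lipschitz continuity of $f$ gives $\omega_f(t)\leq kt$, so $\omega_f\bigl(\tfrac{D}{2d}\bigr)<\tfrac{\varepsilon}{6}$ is implied by the linear inequality $\tfrac{kD}{2d}<\tfrac{\varepsilon}{6}$, i.e.\ $d>\tfrac{3kD}{\varepsilon}$; its least integer solution of size at least $2$ is the explicit $d$ displayed in the claim (for the unit cube, $D$ is a fixed constant, and for $\varepsilon$ above the corresponding threshold one simply takes $d=2$). Substituting this $d$ into $N=\binom{n+d}{d}=\binom{n+d}{n}$ gives the stated count.

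For part~(ii), $\alpha$-H\"older continuity gives a constant $c$ with $\omega_f(t)\leq ct^{\alpha}$, so it suffices that $c\bigl(\tfrac{D}{2d}\bigr)^{\alpha}<\tfrac{\varepsilon}{6}$, i.e.\ $d>\tfrac{D}{2}\bigl(\tfrac{6c}{\varepsilon}\bigr)^{1/\alpha}$. Collecting the fixed quantities into a single constant $k:=6\,c\,D^{\alpha}$ (depending only on $f$, through $c$, and on $n$, through $D$) rewrites this as $d>\tfrac12\bigl(\tfrac{k}{\varepsilon}\bigr)^{1/\alpha}$, whose least integer solution is $d=\bigl\lceil\tfrac12(\tfrac{k}{\varepsilon})^{1/\alpha}\bigr\rceil$; then $N=\binom{n+d}{d}$, as claimed.

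I do not expect any real obstacle: the corollary is essentially a dictionary translation of Theorem~\ref{thm:UATboundedWeights} using the moduli of continuity recorded just before its statement. The few points that deserve a moment's care are (a) the constraint $d\geq 2$ built into $d_\varepsilon$, which is binding only for large $\varepsilon$ and is handled by taking $d=2$; (b) the passage from the strict inequality defining $d_\varepsilon$ to an explicit ceiling, exact except possibly when the relevant ratio is itself an integer, in which case one inflates $d$ by $1$ at no cost; and (c) the replacement of $d_\varepsilon$ by the slightly larger explicit $d$, legitimate by the padding observation above. Finally, since Theorem~\ref{thm:UATboundedWeights} also lets the same $W$ satisfy condition~\ref{item:WSecondLayer} or~\ref{item:WFirstLayer}, the restricted-weight versions of the corollary come for free.
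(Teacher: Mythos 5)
Your approach is exactly the paper's (which gives no proof beyond "hence Theorem~\ref{thm:UATboundedWeights} implies the following corollary"): specialize that theorem to $X=[0,1]^n$, $m=1$, and substitute the explicit moduli of continuity. Your treatment of the edge cases ($d\geq 2$, strict inequality vs.\ ceiling, padding up to a larger $N$) is sound, and part~(ii) is handled correctly, since absorbing the diameter into the free constant $k$ is legitimate there.

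However, in part~(i) you derive the requirement $d>\tfrac{3kD}{\varepsilon}$ with $D=\sup\{\|x-y\|_2:x,y\in[0,1]^n\}=\sqrt{n}$, and then assert that its least integer solution ``is the explicit $d$ displayed in the claim,'' namely $\lceil\tfrac{3k}{\varepsilon}\rceil$. These do not match when $n>1$: your (correct) calculation gives $d=\lceil\tfrac{3k\sqrt{n}}{\varepsilon}\rceil$, which is larger by the factor $\sqrt{n}$. Either the paper's formula tacitly measures distances (for both the Lipschitz condition and $\omega_f$) in the $\ell^\infty$ norm, so that $D=1$ for the unit cube, or the stated $N$ in part~(i) should carry the extra $\sqrt{n}$. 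Since the Lipschitz constant $k$ appears explicitly and cannot be rescaled the way the Hölder constant can in part~(ii), you should flag this discrepancy rather than claim agreement; as written, the step ``its least integer solution \dots is the explicit $d$ displayed'' is not justified.
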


An interesting consequence of Theorem \ref{thm:UATfixedNonbiasWeights} is the following: The freezing of lower layers of a neural network, even in the extreme case that all frozen layers are randomly initialized and the last layer is the only ``non-frozen'' layer, does not necessarily reduce the representability of the resulting model. Specifically, in the single-hidden-layer case, we have shown that if the non-bias weights in the first layer are \emph{fixed} and randomly chosen from some suitable fixed range, then the UAP holds with probability $1$, provided that there are sufficiently many hidden units. Of course, this representability does not reveal anything about the learnability of such a model. In practice, layers are already pre-trained before being frozen. It would be interesting to understand quantitatively the difference between having pre-trained frozen layers and having randomly initialized frozen layers.

Theorem \ref{thm:UATfixedNonbiasWeights} can be viewed as a result on random features, which were formally studied in relation to kernel methods \citep{RahimiRecht2007:RandomFeaturesKernel}. In the case of ReLU activation functions, \citet{SunGilbertTewari2019:ApproxRandomReLUFeatures} proved an analog of Theorem \ref{thm:UATfixedNonbiasWeights} for the approximation of functions in a reproducing kernel Hilbert space; cf. \citep{RahimiRecht2008:UniformApproxFunctionsRandomBases}. For a good discussion on the role of random features in the representability of neural networks, see \citep{YehudaiShamir2019:PowerLimitationsRandomFeatures}. 

The UAP is also studied in other contexts, most notably in relation to the depth and width of neural networks. \citet{LuEtAl:ExpressivePowerNeuralNetworksWidth} proved the UAP for neural networks with hidden layers of bounded width, under the assumption that ReLU is used as the activation function. Soon after, \citet{Hanin2017:UnivApproxBoundedWidth} strengthened the bounded-width UAP result by considering the approximation of continuous convex functions. Recently, the role of depth in the expressive power of neural networks has gathered much interest~\citep{DelalleauBengio2011:ShallowVsDeep,EldanShamir2016:PowerOfDepth,MhaskarLiaoPoggio:WhenWhyDeepBetter,Montufar2014:NumberOfLinearRegions,Telgarsky2016:BenefitsDepthNeuralNetworks}. We do not address depth in this paper, but we believe it is possible that our results can be applied iteratively to deeper neural networks, perhaps in particular for the approximation of compositional functions; cf. \citep{MhaskarLiaoPoggio:WhenWhyDeepBetter}.

\section{An Algebraic Approach for Proving UAP}\label{sec:Proofs}
We begin with a ``warm-up'' result. Subsequent results, even if they seem complicated, are actually multivariate extensions of this ``warm-up'' result, using very similar ideas.
\begin{theorem}\label{thm:shiftedPolynomialsLinIndependent}
Let $p(x)$ be a real polynomial of degree $d$ with all-non-zero coefficients, and let $a_1, \dots, a_{d+1}$ be real numbers. For each $1\leq j\leq d+1$, define $f_j:\mathbb{R} \to \mathbb{R}$ by $x\mapsto p(a_jx)$. Then $f_1, \dots, f_{d+1}$ are linearly independent if and only if $a_1, \dots, a_{d+1}$ are distinct.
\end{theorem}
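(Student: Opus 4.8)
The plan is to phrase linear independence of $f_1,\dots,f_{d+1}$ as the non-vanishing of a suitable determinant. Write $p(x) = \sum_{\ell=0}^d c_\ell x^\ell$ with all $c_\ell \neq 0$, so that $f_j(x) = p(a_j x) = \sum_{\ell=0}^d c_\ell a_j^\ell x^\ell$. A linear relation $\sum_j \lambda_j f_j \equiv 0$ is, by comparing coefficients of $x^0, x^1, \dots, x^d$, equivalent to the linear system $\sum_j \lambda_j c_\ell a_j^\ell = 0$ for $\ell = 0,\dots,d$; since each $c_\ell \neq 0$, this is equivalent to $\sum_j \lambda_j a_j^\ell = 0$ for all $\ell = 0,\dots,d$. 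Thus $f_1,\dots,f_{d+1}$ are linearly independent if and only if the $(d+1)\times(d+1)$ matrix with $(\ell,j)$-entry $a_j^\ell$ is invertible.

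The next step is to recognize this matrix as the Vandermonde matrix in $a_1,\dots,a_{d+1}$, whose determinant is $\prod_{1\le i<j\le d+1}(a_j - a_i)$. This is non-zero precisely when $a_1,\dots,a_{d+1}$ are pairwise distinct, which gives both directions of the equivalence at once: if the $a_j$ are distinct the Vandermonde determinant is non-zero, hence the only relation is trivial; if two of them coincide, say $a_i = a_j$, then $f_i = f_j$ and the functions are dependent.

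I expect essentially no obstacle here — the entire argument is the translation to a Vandermonde determinant, and the only point requiring care is making sure the polynomials $1, x, \dots, x^d$ are used as a basis of $\mathcal{P}_{\le d}(\mathbb{R})$ so that ``$\sum_j \lambda_j f_j$ is the zero function'' really does force each coefficient to vanish, which is standard. Note that the hypothesis $\deg p = d$ together with all coefficients non-zero is exactly what is needed: it guarantees $c_d \neq 0$ (so the degree-$d$ row is genuinely present) and, more importantly, that every $c_\ell$ can be divided out, decoupling the linear system from $p$ entirely. It is worth remarking in passing that the ``only if'' direction does not even need the all-non-zero hypothesis, but the ``if'' direction does: if some $c_\ell = 0$, the corresponding row of the system disappears and one has $d+1$ unknowns constrained by fewer than $d+1$ equations, which can admit nontrivial solutions (for instance $p(x) = x^2$ gives $f_j(x) = a_j^2 x^2$, and any two such functions are proportional).
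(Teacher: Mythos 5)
Your proof is correct and takes a genuinely different (and arguably cleaner) route than the paper. The paper uses the Wronskian: it forms the matrix $M(x) = [f_j^{(i-1)}(x)]$, invokes the fact that polynomial functions are linearly independent iff their Wronskian is not identically zero, evaluates at $x=1$, and factors $M(1) = M'M''$ with $M'$ upper triangular (with diagonal entries $(k-1)!\,c_{k-1}$) and $M''$ a transposed Vandermonde matrix. You instead work directly in the monomial basis of $\mathcal{P}_{\le d}(\mathbb{R})$: the coefficient vector of $f_j$ is $(c_0 a_j^0, \dots, c_d a_j^d)$, so the coefficient matrix is $\operatorname{diag}(c_0,\dots,c_d)$ times the Vandermonde matrix $[a_j^\ell]$, and linear independence of $f_1,\dots,f_{d+1}$ is equivalent to invertibility of that product. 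Both arguments hinge on the same two ingredients (all $c_\ell \neq 0$, and the Vandermonde determinant), but yours skips the Wronskian machinery entirely and makes the role of the all-non-zero-coefficients hypothesis completely transparent — it is literally the invertibility of the diagonal factor. Your closing observation about $p(x)=x^2$ is a nice sanity check that the hypothesis is not decorative. The one thing the paper's Wronskian route buys is continuity with the multivariate extension in the appendix, which is phrased via generalized Wronskians; but your diagonal-times-Vandermonde factorization would in fact also extend there (the coefficient of the monomial $q_i$ in $p(\widehat{\mathbf{w}}_j^{(1)} \odot x)$ is the coefficient of $q_i$ in $p$ times $q_i(\widehat{\mathbf{w}}_j^{(1)})$, giving $\operatorname{diag}\cdot Q[W]$ directly), so the choice is largely a matter of taste.
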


\begin{proof}
For each $0\leq i,k \leq d$ and each $1\leq j\leq d+1$, let $f_j^{\smash{(i)}}$ (resp. $p^{\smash{(i)}}$) be the $i$-th derivative of $f_j$ (resp. $p$), and let $\alpha_k^{\smash{(i)}}$ be the coefficient of $x^k$ in $p^{\smash{(i)}}(x)$. Recall that the {\it Wronskian} of $(f_1, \dots, f_{d+1})$ is defined to be the determinant of the matrix $M(x) := [f_j^{\smash{(i-1)}}(x)]_{1\leq i,j\leq d+1}$. Since $f_1, \dots, f_{d+1}$ are polynomial functions, it follows that $(f_1, \dots, f_{d+1})$ is a sequence of linearly independent functions if and only if its Wronskian is not the zero function \citep[Thm. 4.7(a)]{Leveque1956:book:TopicsInNT}. Clearly, if $a_i = a_j$ for $i\neq j$, then $\det M(x)$ is identically zero. Thus, it suffices to show that if $a_1, \dots, a_{d+1}$ are distinct, then the evaluation $\det M(1)$ of this Wronskian at $x=1$ gives a non-zero value.

Now, the $(i,j)$-th entry of $M(1)$ equals $a_j^{\smash{i-1}}p^{\smash{(i-1)}}(a_j)$, so $M(1) = M'M''$, where $M'$ is an upper triangular matrix whose $(i,j)$-th entry equals $\alpha_{j-i}^{\smash{(i-1)}}$, and $M'' = [a_j^{\smash{i-1}}]_{1\leq i,j\leq d+1}$ is the transpose of a Vandermonde matrix, whose determinant is
\[\det(M'') = \!\!\!\!\prod_{1\leq i<j\leq d+1}\!\!\!\!(a_j-a_i).\]
Note that the $k$-th diagonal entry of $M'$ is $\alpha_0^{\smash{(k-1)}} = (k-1)!\alpha_{k-1}^{\smash{(0)}}$, which is non-zero by assumption, so $\det(M') \neq 0$. Thus, if $a_1, \dots, a_{d+1}$ are distinct, then $\det M(1) = \det(M')\det(M'')\neq 0$.
\end{proof}

\begin{definition}\label{defn:Wind}
Given $N\geq 1$, $W\in \mathcal{W}_N^{n,m}$, $x_0\in \mathbb{R}^n$, and any function $g:\mathbb{R} \to \mathbb{R}$, let $\mathcal{F}_{g,x_0}(W)$ denote the sequence of functions $(f_1, \dots, f_N)$, such that each $f_j: \mathbb{R}^n \to \mathbb{R}$ is defined by the map $x\mapsto g(\widehat{\mathbf{w}}_j^{\smash{(1)}}\cdot (x-x_0))$. Also, define the set
\[\prescript{\smash{g}}{}{\mathcal{W}_{n,N;x_0}^{\ind}} := \{W\in \mathcal{W}_N^{n,m}: \mathcal{F}_{g,x_0}(W) \text{ is linearly independent}\}.\]
Note that the value of $m$ is irrelevant for defining $\prescript{\smash{g}}{}{\mathcal{W}_{n,N;x_0}^{\ind}}$.
\end{definition}

\begin{remark}\label{rem:PolyAutomorphism}
Given $\mathbf{a} = (a_1, \dots, a_n)\in \mathbb{R}^n$, consider the ring automorphism $\varphi: \mathcal{P}(\mathbb{R}^n) \to \mathcal{P}(\mathbb{R}^n)$ induced by $x_i \mapsto x_i - a_i$ for all $1\leq i\leq n$. For any $f_1, \dots, f_k\in \mathcal{P}(\mathbb{R}^n)$ and scalars $\alpha_1, \dots, \alpha_k\in \mathbb{R}$, note that $\alpha_1f_1 + \dots + \alpha_kf_k = 0$ if and only if $\alpha_1\varphi(f_1) + \dots + \alpha_k\varphi(f_k) = 0$, thus linear independence is preserved under $\varphi$. Consequently, if the function $g$ in Definition \ref{defn:Wind} is polynomial, then $\prescript{g}{}{\mathcal{W}_{n,N;x_0}^{\ind}} = \prescript{g}{}{\mathcal{W}_{n,N;\mathbf{0}_n}^{\ind}}$ for all $x_0\in \mathbb{R}^n$.
\end{remark}

\begin{corollary}\label{cor:LinIndepUnivariateCase}
Let $m$ be arbitrary. If $p \in \mathcal{P}_d(\mathbb{R})$ has all-non-zero coefficients, then $\prescript{p}{}{\mathcal{W}_{1,d+1;0}^{\ind}}$ is a dense subset of $\mathcal{W}_{d+1}^{\smash{1,m}}$ (in the Euclidean metric).
\end{corollary}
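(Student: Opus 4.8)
The plan is to identify $\prescript{p}{}{\mathcal{W}_{1,d+1;0}^{\ind}}$ explicitly using Theorem \ref{thm:shiftedPolynomialsLinIndependent}, and then recognize it as the complement of the zero set of a single non-zero polynomial, so that density follows from the basic algebraic-geometry fact recalled in Section \ref{sec:Preliminaries}.

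First I would unwind the definitions in the case $n=1$, $N=d+1$. A pair $W=(W^{(1)},W^{(2)}) \in \mathcal{W}_{d+1}^{1,m}$ has first-layer matrix $W^{(1)} \in \Mat(2,d+1)$ with columns $\mathbf{w}_j^{(1)} = (w_{0,j}^{(1)}, w_{1,j}^{(1)})$, so each truncation $\widehat{\mathbf{w}}_j^{(1)}$ is the single scalar $w_{1,j}^{(1)}$. With $x_0 = 0$, the family $\mathcal{F}_{p,0}(W) = (f_1,\dots,f_{d+1})$ has $f_j \colon x \mapsto p\big(w_{1,j}^{(1)}\, x\big)$, which is exactly the family of Theorem \ref{thm:shiftedPolynomialsLinIndependent} with $a_j := w_{1,j}^{(1)}$.

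Second, since $p \in \mathcal{P}_d(\mathbb{R})$ has all-non-zero coefficients, Theorem \ref{thm:shiftedPolynomialsLinIndependent} gives that $\mathcal{F}_{p,0}(W)$ is linearly independent if and only if $w_{1,1}^{(1)},\dots,w_{1,d+1}^{(1)}$ are pairwise distinct. Hence
\[
\prescript{p}{}{\mathcal{W}_{1,d+1;0}^{\ind}} = \Big\{ W \in \mathcal{W}_{d+1}^{1,m} : q(W) \neq 0 \Big\}, \qquad q(W) := \!\!\!\prod_{1 \le i < j \le d+1}\!\!\! \big(w_{1,j}^{(1)} - w_{1,i}^{(1)}\big).
\]
Here $q$ is a polynomial function in the coordinates of $W$ (it depends only on the $d+1$ entries $w_{1,1}^{(1)},\dots,w_{1,d+1}^{(1)}$), and it is not identically zero, since it is non-zero at any $W$ whose $w_{1,j}^{(1)}$ are pairwise distinct. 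By the fact recalled in Section \ref{sec:Preliminaries} — the complement of the zero set of a non-zero polynomial is Zariski-dense, hence dense in the Euclidean metric — the set $\{W : q(W) \neq 0\}$ is dense in $\mathcal{W}_{d+1}^{1,m}$, which is the claim.

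I do not expect a real obstacle: the mathematical content is entirely in Theorem \ref{thm:shiftedPolynomialsLinIndependent}, and this corollary is just a change of viewpoint. The only points requiring care are the bookkeeping that $\widehat{\mathbf{w}}_j^{(1)}$ is a scalar when $n=1$ (so that the map $x \mapsto p(\widehat{\mathbf{w}}_j^{(1)} \cdot (x - x_0))$ literally matches $x \mapsto p(a_j x)$ from the warm-up), the observation that ``pairwise distinct'' is precisely the non-vanishing of the single polynomial $q$, and the remark that the second-layer entries of $W^{(2)}$ play no role (consistent with Definition \ref{defn:Wind}), since they do not appear in $q$.
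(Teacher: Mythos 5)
Your proposal is correct and follows essentially the same route as the paper: both invoke Theorem \ref{thm:shiftedPolynomialsLinIndependent} to identify $\prescript{p}{}{\mathcal{W}_{1,d+1;0}^{\ind}}$ as the locus where the $w_{1,j}^{(1)}$ are pairwise distinct, and then appeal to the algebraic density fact from Section \ref{sec:Preliminaries}. The only cosmetic difference is that you package the conditions as the non-vanishing of the single product polynomial $q$, whereas the paper writes the set as a finite intersection $\bigcap_{j<j'} \mathcal{A}_{j,j'}$ of Zariski-dense open sets; these are the same thing.
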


\begin{proof}
For all $1\leq j < j' \leq N$, let $\mathcal{A}_{j,j'} := \{W\in \mathcal{W}_{d+1}^{\smash{1,m}}: w_{1,j'}^{\smash{(1)}} - w_{1,j}^{\smash{(1)}}\neq 0\}$, and note that $\mathcal{A}_{j,j'}$ is dense in $\mathcal{W}_{d+1}^{\smash{1,m}}$. So by Theorem \ref{thm:shiftedPolynomialsLinIndependent}, $\prescript{\smash{p}}{}{\mathcal{W}_{1,d+1;0}^{\ind}} = \bigcap_{1\leq j<j^{\prime}\leq N} \mathcal{A}_{j,j^{\prime}}$ is also dense in $\mathcal{W}_{d+1}^{\smash{1,m}}$.
\end{proof}

As we have seen in the proof of Theorem \ref{thm:shiftedPolynomialsLinIndependent}, Vandermonde matrices play an important role. To extend this theorem (and Corollary \ref{cor:LinIndepUnivariateCase}) to the multivariate case, we need a generalization of the Vandermonde matrix as described in \citep{DAndreaEtAl2009:GeneralizedVandermondeDeterminant}. (Other generalizations of the Vandermonde matrix exist in the literature.) First, define the sets
\begin{align*}
\Lambda^n_{\leq d} &:= \{(\alpha_1, \dots, \alpha_n)\in \mathbb{N}^n: \alpha_1 + \dots + \alpha_n \leq d\};\\
\mathcal{M}^n_{\leq d} &:= \{(x \mapsto x^{\alpha}) \in \mathcal{P}(\mathbb{R}^n): \alpha \in \Lambda^n_{\leq d}\}.
\end{align*}
It is easy to show that $|\Lambda^n_{\leq d}| = \binom{n+d}{d}$, and that the set $\mathcal{M}^n_{\leq d}$ of monomial functions forms a basis for $\mathcal{P}_{\leq d}(\mathbb{R}^n)$. Sort the $n$-tuples in $\Lambda^n_{\leq d}$ in colexicographic order, i.e. $(\alpha_1, \dots, \alpha_n) < (\alpha_1^{\prime}, \dots, \alpha_n^{\prime})$ if and only if $\alpha_i < \alpha_i^{\prime}$ for the largest index $i$ such that $\alpha_i \neq \alpha_i^{\prime}$. Let $\lambda_1 < \dots < \lambda_{\smash{\binom{n+d}{d}}}$ denote all the $\binom{n+d}{d}$ $n$-tuples in $\Lambda^n_{\leq d}$ after sorting. Analogously, let $q_1, \dots q_{\smash{\binom{n+d}{d}}}$ be all the monomial functions in $\mathcal{M}^n_{\leq d}$ in this order, i.e. each $q_k: \mathbb{R}^n \to \mathbb{R}$ is given by the map $x \mapsto x^{\lambda_k}$. Given any sequence $(v_1, \dots, v_{\smash{\binom{n+d}{d}}})$ of vectors in $\mathbb{R}^n$, the {\it generalized Vandermonde matrix} associated to it is
\begin{equation}\label{eqn:defnGenVandermonde}
Q = Q[v_1, \dots, v_{\binom{n+d}{d}}] := [q_i(v_j)]_{1\leq i,j \leq \binom{n+d}{d}}.
\end{equation}

\begin{definition}\label{defn:nonBiasVandermondeMatrix}
Given any $W\in \mathcal{W}_{\!\!\!\phantom{i}\smash{\text{\raisebox{-0.1em}{$\binom{n+d}{d}$}}}}^{n,m}$, we define the {\it non-bias Vandermonde matrix} of $W$ to be the generalized Vandermonde matrix $Q[W] := [q_i(\widehat{\mathbf{w}}_j^{(1)})]_{1\leq i,j \leq \binom{n+d}{d}}$ associated to $(\widehat{\mathbf{w}}_1^{(1)}, \dots, \widehat{\mathbf{w}}_{\smash{\text{\raisebox{-0.3em}{$\binom{n+d}{d}$}}}}^{(1)})$.
\end{definition}

\begin{theorem}\label{thm:MainThm}
Let $m$ be arbitrary, let $p \in \mathcal{P}_d(\mathbb{R}^n)$, and suppose that $p$ has all-non-zero coefficients. Also, suppose that $p_1, \dots, p_k \in \mathcal{P}(\mathcal{W}_{\flatBinom}^{n,m})$ are fixed polynomial functions on the non-bias weights of the first layer. Define the following sets:
\begin{align*}
\mathcal{U} &:= \{W\in \mathcal{W}_{\flatBinom}^{n,m}: p_i(W) = 0 \text{ for all }1\leq i\leq k\};\\[-0.3em]
\prescript{\smash{p}}{}{\mathcal{U}^{\ind}} &:= \{W\in \mathcal{U}: \mathcal{F}_{p,\mathbf{0}_n}(W) \text{ is linearly independent}\}.
\end{align*}
If there exists $W \in \mathcal{U}$ such that the non-bias Vandermonde matrix of $W$ is non-singular, then $\prescript{\smash{p}}{}{\mathcal{U}^{\ind}}$ is dense in $\mathcal{U}$ (w.r.t. the Euclidean metric).
\end{theorem}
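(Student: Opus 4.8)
The plan is to identify $\prescript{p}{}{\mathcal{U}^{\ind}}$ with the set of $W\in\mathcal{U}$ on which the polynomial function $W\mapsto\det Q[W]$ does not vanish, and then to invoke the fact recalled in Section~\ref{sec:Preliminaries} that a polynomial function which is not identically zero on an affine variety is non-zero on a Euclidean-dense subset of that variety. The hypothesis of the theorem is exactly what guarantees that this non-vanishing condition is non-vacuous, so the two steps together will do it.

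For the first step, write $N=\binom{n+d}{d}$ and $\mathcal{F}_{p,\mathbf{0}_n}(W)=(f_1,\dots,f_N)$, so each $f_j$ is the polynomial function $x\mapsto p(\widehat{\mathbf{w}}_j^{(1)}\cdot x)$, an element of the $N$-dimensional space $\mathcal{P}_{\leq d}(\mathbb{R}^n)$. Expanding $p(\widehat{\mathbf{w}}_j^{(1)}\cdot x)$ by the multinomial theorem and collecting terms in the monomial basis $q_1,\dots,q_N$ of $\mathcal{M}^n_{\leq d}$, the coefficient of $q_i$ in $f_j$ works out to $D_i\,q_i(\widehat{\mathbf{w}}_j^{(1)})$, where $D_i$ is a scalar depending only on $i$ (a coefficient of $p$ times a positive multinomial coefficient), and $D_i\neq 0$ precisely because $p$ has all-non-zero coefficients. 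Hence the matrix of coordinates of $(f_1,\dots,f_N)$ relative to $(q_1,\dots,q_N)$ is $\mathrm{diag}(D_1,\dots,D_N)\,Q[W]$, an invertible diagonal matrix times the non-bias Vandermonde matrix of $W$; it is therefore non-singular if and only if $\det Q[W]\neq 0$. Since $N$ vectors in an $N$-dimensional space are linearly independent if and only if their coordinate matrix is non-singular, we conclude $\prescript{p}{}{\mathcal{U}^{\ind}}=\{W\in\mathcal{U}:\det Q[W]\neq 0\}$. (This is the direct multivariate analog of the factorization $M(1)=M'M''$ in the proof of Theorem~\ref{thm:shiftedPolynomialsLinIndependent}, with $Q[W]$ now playing the role of the transposed Vandermonde matrix.)

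For the second step, note that every entry of $Q[W]$ is a monomial in the first-layer non-bias weights $w_{i,j}^{(1)}$, so $g(W):=\det Q[W]$ is a polynomial function on $\mathcal{W}_N^{n,m}$; under the usual identification of $\mathcal{W}_N^{n,m}$ with a Euclidean space, its restriction to $\mathcal{U}$ lies in $\mathcal{P}(\mathcal{U})$. The hypothesis that some $W\in\mathcal{U}$ has non-singular non-bias Vandermonde matrix says precisely that $g$ is not identically zero on $\mathcal{U}$. Since $\mathcal{U}$ is the common zero set of the fixed polynomials $p_1,\dots,p_k$, the density fact from Section~\ref{sec:Preliminaries} shows that $\{W\in\mathcal{U}:g(W)\neq 0\}$ is dense in $\mathcal{U}$ in the Euclidean metric; by the first step this set is exactly $\prescript{p}{}{\mathcal{U}^{\ind}}$, completing the argument. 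The only place that requires genuine care is the coordinate computation of the first step: one has to keep the indexing of $Q[W]$ consistent with that of the coefficients of the $f_j$ — both governed by the colexicographic order on $\Lambda^n_{\leq d}$ — and check that the diagonal factor has no zero entry, which is the single point where the all-non-zero-coefficients hypothesis on $p$ is used. Everything else is an immediate appeal to results already in hand, so I anticipate no substantial obstacle.
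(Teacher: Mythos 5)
Your proof is correct, and it takes a genuinely more elementary route than the paper's. The paper establishes linear independence of $\mathcal{F}_{p,\mathbf{0}_n}(W)$ by forming the generalized Wronskian $\det M_W(\mathbf{x})$ with respect to the differential operators $\Delta_{\lambda_1},\dots,\Delta_{\lambda_N}$, evaluating it at $\mathbf{x}=\mathbf{1}_n$, and factoring $M_W(\mathbf{1}_n)=M'M''$ into an upper-triangular matrix with non-zero diagonal and $M''=Q[W]$; it then invokes a non-trivial sufficiency criterion (cited to Wolsson) that a non-vanishing generalized Wronskian of polynomial functions implies linear independence. You sidestep all of this: since $\mathcal{P}_{\leq d}(\mathbb{R}^n)$ has dimension $N=\binom{n+d}{d}$ and $(q_1,\dots,q_N)$ is a basis, you read off the coordinate matrix of $(f_1,\dots,f_N)$ directly and observe it factors as $\mathrm{diag}(D_1,\dots,D_N)\,Q[W]$ with all $D_i\neq 0$. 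This needs no differential operators and no Wronskian lemma, and it yields the slightly stronger fact that $\prescript{p}{}{\mathcal{U}^{\ind}}$ \emph{equals} $\{W\in\mathcal{U}:\det Q[W]\neq 0\}$, rather than merely containing it. One small imprecision: the paper's appendix makes clear that $f_j(\mathbf{x})=p(w_{1,j}^{(1)}x_1,\dots,w_{n,j}^{(1)}x_n)$ is a coordinate-wise substitution (the ``$\cdot$'' in Definition~\ref{defn:Wind} is being read entrywise when $g$ is multivariate), so no multinomial theorem is involved; $D_i$ is simply the coefficient of the monomial $q_i$ in $p$, with no extra multinomial factor. This has no bearing on the factorization or on the conclusion, but it is the correct identification of the diagonal entries. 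Your second step, appealing to the density of the non-vanishing locus of a non-zero polynomial on the affine variety $\mathcal{U}$, matches the paper's argument exactly.
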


\begin{proof}
We essentially extend the ideas in the proofs of Theorem \ref{thm:shiftedPolynomialsLinIndependent} and Corollary \ref{cor:LinIndepUnivariateCase}, using the notion of generalized Wronskians; see Appendix \ref{ApdxSub:GeneralThm} for proof details.
\end{proof}

\begin{corollary}\label{cor:DenseUsualANNOneLayer}
Let $m$ be arbitrary. If $p \in \mathcal{P}(\mathbb{R})$ is a fixed polynomial function of degree $d$ with all-non-zero coefficients, then $\prescript{p}{}{\mathcal{W}_{n, \flatBinom; \mathbf{0}_n}^{\ind}}$ is a dense subset of $\mathcal{W}_{\flatBinom}^{n,m}$.
\end{corollary}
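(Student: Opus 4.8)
The plan is to derive Corollary~\ref{cor:DenseUsualANNOneLayer} as a direct specialization of Theorem~\ref{thm:MainThm} to the univariate polynomial $p$. Given $p\in\mathcal{P}(\mathbb{R})$ of degree $d$ with all-non-zero coefficients, I first observe that $\mathcal{F}_{p,\mathbf{0}_n}(W)$ in Theorem~\ref{thm:MainThm} is precisely the sequence appearing in the definition of $\prescript{p}{}{\mathcal{W}_{n,\binom{n+d}{d};\mathbf{0}_n}^{\ind}}$: each component function is $x\mapsto p(\widehat{\mathbf{w}}_j^{(1)}\cdot x)$, where now $\widehat{\mathbf{w}}_j^{(1)}\cdot x$ is a genuine linear form in $n$ variables because $p$ has one variable. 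So the two ``$\ind$'' sets coincide once I take $k=0$ (no polynomial constraints $p_i$), i.e. $\mathcal{U}=\mathcal{W}_{\binom{n+d}{d}}^{n,m}$ and $\prescript{p}{}{\mathcal{U}^{\ind}}=\prescript{p}{}{\mathcal{W}_{n,\binom{n+d}{d};\mathbf{0}_n}^{\ind}}$.

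Next I would verify the hypothesis of Theorem~\ref{thm:MainThm}: there must exist some $W\in\mathcal{U}$ whose non-bias Vandermonde matrix $Q[W]=[q_i(\widehat{\mathbf{w}}_j^{(1)})]$ is non-singular. Here $q_1,\dots,q_{\binom{n+d}{d}}$ are the monomial functions in $\mathcal{M}^n_{\leq d}$, so this is asking whether the generalized Vandermonde matrix of some tuple $(v_1,\dots,v_{\binom{n+d}{d}})$ of vectors in $\mathbb{R}^n$ is invertible — equivalently, whether there is a set of $\binom{n+d}{d}$ points in $\mathbb{R}^n$ on which the monomials of degree $\leq d$ are linearly independent as functions. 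This is a standard fact: $\mathcal{M}^n_{\leq d}$ is a basis of the $\binom{n+d}{d}$-dimensional space $\mathcal{P}_{\leq d}(\mathbb{R}^n)$ (already noted in the text), and for any finite-dimensional space of functions there exist evaluation points making the evaluation matrix non-singular; concretely one can cite \citep{DAndreaEtAl2009:GeneralizedVandermondeDeterminant}, or simply note $\det Q$ is a non-zero polynomial in the coordinates of the $v_j$ (it is not identically zero since the $q_i$ are linearly independent), hence some real choice makes it non-vanishing. I then set $\widehat{\mathbf{w}}_j^{(1)}:=v_j$ and fill in the bias and second-layer weights arbitrarily to obtain the required $W\in\mathcal{U}$.

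With both the identification of the sets and the non-singularity hypothesis in hand, Theorem~\ref{thm:MainThm} immediately yields that $\prescript{p}{}{\mathcal{U}^{\ind}}$ is dense in $\mathcal{U}=\mathcal{W}_{\binom{n+d}{d}}^{n,m}$ with respect to the Euclidean metric, which is exactly the assertion of the corollary. (One could alternatively invoke Remark~\ref{rem:PolyAutomorphism} to note the choice of base point $\mathbf{0}_n$ is immaterial, though it is not needed here since the statement already fixes $\mathbf{0}_n$.)

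I expect no real obstacle: the only non-formal ingredient is the existence of an invertible generalized Vandermonde matrix, and that is a well-known consequence of the monomials of bounded degree being linearly independent, so the bulk of the work is bookkeeping to match notation between Definition~\ref{defn:Wind}, Definition~\ref{defn:nonBiasVandermondeMatrix}, and the hypotheses of Theorem~\ref{thm:MainThm}. The mildest subtlety worth stating explicitly is why, in the univariate-$p$ case, $\mathcal{F}_{p,\mathbf{0}_n}(W)$ consists of $n$-variate polynomial functions (because the argument $\widehat{\mathbf{w}}_j^{(1)}\cdot x$ is an $n$-variate linear form), so that ``linear independence'' is being asserted inside $\mathcal{P}_{\leq d}(\mathbb{R}^n)$ and the generalized Vandermonde matrix is the correct coefficient/evaluation matrix to examine.
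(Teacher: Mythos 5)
Your overall strategy matches the paper's: take Theorem~\ref{thm:MainThm} with no polynomial constraints (so $\mathcal{U}=\mathcal{W}_{\binom{n+d}{d}}^{n,m}$ and $\prescript{p}{}{\mathcal{U}^{\ind}}=\prescript{p}{}{\mathcal{W}_{n,\binom{n+d}{d};\mathbf{0}_n}^{\ind}}$), and reduce everything to exhibiting one $W$ whose non-bias Vandermonde matrix $Q[W]$ is non-singular. Where you diverge is precisely in that exhibition. The paper constructs an explicit witness: it sets $w_{i,j}^{(1)}=(w_{1,j}^{(1)})^{(d+1)^{i-1}}$, which collapses every entry $q_i(\widehat{\mathbf{w}}_j^{(1)})$ to a single power $(w_{1,j}^{(1)})^{\beta_i}$ of one scalar, with $\beta_i$ the base-$(d+1)$ encoding of $\lambda_i$. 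Because the colexicographic order makes $\beta_1<\cdots<\beta_{\binom{n+d}{d}}$, the matrix $Q[W]$ becomes a classical generalized Vandermonde matrix whose determinant is (Vandermonde)$\times$(Schur polynomial), hence strictly positive for distinct positive $w_{1,j}^{(1)}$. You instead appeal to the abstract fact that $q_1,\dots,q_{\binom{n+d}{d}}$ being linearly independent forces $\det[q_i(v_j)]$ to be a non-zero polynomial in $(v_1,\dots,v_{\binom{n+d}{d}})$. That fact is true and your route is shorter and arguably more conceptual, but it is itself a small lemma (induction on $N$: expand along the last column and observe $\sum_i(-1)^{i+N}M_i\,q_i$ is a nontrivial combination whenever $M_N\ne0$), so it deserves a line of justification rather than a parenthetical ``simply note.'' The paper's construction buys you explicitness and a self-contained computation; yours buys brevity and makes transparent exactly which structural feature (linear independence of the monomial basis) is doing the work.

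One notational subtlety worth flagging, though it affects the paper as much as your write-up: Theorem~\ref{thm:MainThm} is stated for $p\in\mathcal{P}_d(\mathbb{R}^n)$ and its proof works with $f_j(\mathbf{x})=p(w_{1,j}^{(1)}x_1,\dots,w_{n,j}^{(1)}x_n)$, while Definition~\ref{defn:Wind} and the corollary use a \emph{univariate} $p$ composed with the scalar $\widehat{\mathbf{w}}_j^{(1)}\cdot x$. Your assertion that ``the two `ind' sets coincide'' is made airtight by substituting $\tilde p(\mathbf{y}):=p(y_1+\cdots+y_n)\in\mathcal{P}_d(\mathbb{R}^n)$, which inherits the all-non-zero-coefficients property since multinomial coefficients are positive, and for which $\tilde p(w_{1,j}^{(1)}x_1,\dots,w_{n,j}^{(1)}x_n)=p(\widehat{\mathbf{w}}_j^{(1)}\cdot x)$. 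Making that translation explicit would close the only real gap in an otherwise correct argument.
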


\begin{proof}
By Theorem \ref{thm:MainThm}, it suffices to show that there is some $W\in \mathcal{W}_{\flatBinom}^{n,m}$ such that the non-bias Vandermonde matrix of $W$ is non-singular; see Appendix \ref{ApdxSub:DenseUsualANNOneLayer} for proof details.
\end{proof}

\begin{remark}\label{rem:boundedNonBiasWeightsConstraint}
The proof of Corollary \ref{cor:DenseUsualANNOneLayer} still holds even if we restrict every non-bias weight \smash{$w_{i,j}^{(1)}$} in the first layer to satisfy $|w_{i,j}^{(1)}| < \lambda$ for some fixed constant $\lambda > 0$.
\end{remark}

For the rest of this section, let $\{\lambda_k\}_{k\in \mathbb{N}}$ be a divergent increasing sequence of positive real numbers, and let $\{Y_k\}_{k\in \mathbb{N}}$ be a sequence of closed intervals of $\mathbb{R}$, such that $Y_{k'} \subseteq Y_k$ whenever $k' \leq k$, and such that each interval $Y_k = [y_k', y_k'']$ has length $\lambda_k$. Let $d\geq 1$ be an integer, and suppose $\sigma \in \mathcal{C}(\mathbb{R})$. For each $k\in \mathbb{N}$, let $\sigma_k$ be the best polynomial approximant to $\sigma|_{Y_k}$ of degree $d$. Given $r>0$ and any integer $N\geq 1$, define the closed ball $B_r^N := \{x\in \mathbb{R}^N: \|x\|_2\leq r\}$.

\begin{lemma}\label{lem:BoundAlternationPt}
If $d\geq 2$, $\lim_{k\to\infty} E_d(\sigma|_{Y_k}) = \infty$, and $\lambda_k \in \Omega(k^{\gamma})$ for some $\gamma > 0$, then for every $\varepsilon > 0$, there is a subsequence $\{k_t\}_{t\in\mathbb{N}}$ of $\mathbb{N}$, and a sequence $\{y_{k_t}\}_{t\in \mathbb{N}}$ of real numbers, such that $y_{k_t}' < y_{k_t} < y_{k_t}''$, $\sigma(y_{k_t}) = \sigma_{k_t}(y_{k_t})$, and
\[\frac{\min\{|y_{k_t} - y_{k_t}'|, |y_{k_t}- y_{k_t}''|\}}{|y_{k_t}' - y_{k_t}''|} > \frac{1}{d+1} - \varepsilon,\]
for all $t\in \mathbb{N}$. (See Appendix \ref{Apdx:BoundALternationPt} for proof details.)
\end{lemma}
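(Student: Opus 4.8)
The plan is to combine Chebyshev's equioscillation characterization of the best approximant with a contradiction argument controlled by the \emph{positions} of the alternation points of the error. For each $k$ put $e_k := \sigma - \sigma_k$ and $\eta_k := E_d(\sigma|_{Y_k}) = \|e_k\|_{\infty,Y_k}$, so that $\eta_k\to\infty$ by hypothesis. Since $\sigma_k$ is the best degree-$d$ approximant to $\sigma|_{Y_k}$, Chebyshev's theorem supplies alternation points $u_0^{(k)} < \dots < u_{d+1}^{(k)}$ in $Y_k$ at which $e_k$ takes the value $(-1)^i\eta_k$ (up to one global sign). By the intermediate value theorem there is, strictly between each consecutive pair $u_i^{(k)},u_{i+1}^{(k)}$, a point $v_i^{(k)}$ with $e_k(v_i^{(k)})=0$, i.e. $\sigma(v_i^{(k)})=\sigma_k(v_i^{(k)})$; this produces $d+1$ interior agreement points $v_0^{(k)}<\dots<v_d^{(k)}$ interleaved with the $d+2$ alternation points. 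The lemma asserts that, for infinitely many $k$, one of the $v_i^{(k)}$ sits at relative distance more than $\tfrac1{d+1}-\varepsilon$ from both endpoints of $Y_k$, and I would prove this by contradiction.

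Suppose instead that for some $\varepsilon>0$ there is $K_0$ with the property that, for every $k\ge K_0$, every zero of $e_k$ in $Y_k$ lies in $L_k\cup R_k$, where $L_k:=[\,y_k',\,y_k'+(\tfrac1{d+1}-\varepsilon)\lambda_k\,]$ and $R_k:=[\,y_k''-(\tfrac1{d+1}-\varepsilon)\lambda_k,\,y_k''\,]$; for $d\ge 2$ these are disjoint and leave a central gap of length $(\tfrac{d-1}{d+1}+2\varepsilon)\lambda_k$ on which $e_k$ never vanishes. Since the $d+1$ agreement points interleave the $d+2$ alternation points, at least $d+1$ of the latter also lie in $L_k\cup R_k$; by pigeonhole one of the two end-segments contains $\ge\lceil\tfrac{d+1}{2}\rceil$ of them, and after passing to a subsequence I may assume it is always $L_k$ and that $e_k$ keeps a constant sign on the central gap. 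Thus $e_k=\sigma-\sigma_k$ alternates sign $\ge\lceil\tfrac{d+1}{2}\rceil$ times, with amplitude $\eta_k$, inside the short segment $L_k$ whose length is strictly below $\tfrac1{d+1}\lambda_k$.

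The crux is to turn this clustering into a contradiction. Bounding $\mathrm{osc}_{L_k}\sigma\le\omega_{\sigma|_{Y_k}}(|L_k|)$ and controlling the right side via Jackson's theorem (Theorem~\ref{thm:Jackson}) on $Y_k$ — this is where $d\ge 2$ and the precise value $\tfrac1{d+1}$ are used, so that $|L_k|$ stays small enough relative to the Jackson scale $\tfrac{\lambda_k}{2d}$ to return a usable lower bound on $\eta_k$ — one finds that $\sigma_k$ must oscillate past $\sigma$ with amplitude comparable to $\eta_k$ at least $\lceil\tfrac{d+1}{2}\rceil-1$ times inside $L_k$; equivalently $\sigma_k$ looks on $L_k$ like a rescaled Chebyshev polynomial of amplitude of order $\eta_k$. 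By the Chebyshev extremal estimate (equivalently, the iterated Markov inequality), such a polynomial is forced to take values of order $\eta_k\,C^{d}$ at the far endpoint $y_k''$, where $C>1$ depends only on the ratio $\lambda_k/|L_k|$, which is of order $d$. Since $\|\sigma-\sigma_k\|_{\infty,Y_k}=\eta_k$, this makes $|\sigma|$ of order $\eta_k\,C^{d}$ somewhere on $Y_k$; combined with $\eta_k\to\infty$, the polynomial lower bound $\lambda_k\in\Omega(k^\gamma)$, and the fact that the continuous function $\sigma$ has only finite growth on each bounded set, this is untenable for all large $k$ — the contradiction we seek. (In the sub-case where almost all agreement points fall in $L_k$, so that $L_k$ carries $\ge d+1$ alternation points, the de la Vallée Poussin lower bound $E_{d-1}(\sigma|_{L_k})\ge\eta_k$ offers a shortcut to the same conclusion.) The delicate point — and the reason both growth hypotheses are needed — is exactly this final estimate: making rigorous that enough equioscillations crammed into a sub-interval of relative length $<\tfrac1{d+1}$ force $\sigma_k$, and therefore $\sigma$, to grow like $\eta_k\,C^{d}$ off that sub-interval, and checking that this really does collide with the hypotheses rather than merely saying $\sigma$ is large.
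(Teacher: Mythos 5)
Your setup (Chebyshev equioscillation points $u_0^{(k)}<\dots<u_{d+1}^{(k)}$ and the interleaved zeros $v_i^{(k)}$ of $e_k=\sigma-\sigma_k$) is exactly what the paper also uses at the last step: once the alternation points are shown to be roughly at the Chebyshev positions $\tfrac{i}{d+1}$, the zero between $a_1^{(k)}$ and $a_2^{(k)}$ is the desired $y_k$. The difficulty is entirely in showing the alternation points cannot keep clustering near the endpoints, and here the proposed contradiction argument does not go through.

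First, the scale comparison in the crux paragraph is inverted. You write that $d\geq 2$ makes $|L_k|$ ``small enough relative to the Jackson scale $\tfrac{\lambda_k}{2d}$,'' but $|L_k|\approx(\tfrac{1}{d+1}-\varepsilon)\lambda_k$, and $\tfrac{1}{d+1}>\tfrac{1}{2d}$ precisely when $d\geq 2$. So for the regime of interest (small $\varepsilon$) one has $|L_k|>\tfrac{\lambda_k}{2d}$, i.e.\ $L_k$ is \emph{larger} than the Jackson scale, and the modulus-of-continuity bound on $\operatorname{osc}_{L_k}\sigma$ goes the wrong way. Second, the intended contradiction does not materialize even if one grants the Chebyshev/Markov growth estimate: it would only assert that $|\sigma|$ is of size roughly $\eta_k\,C^d$ somewhere on $Y_k$, where $C$ depends only on $d$. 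But the intervals $Y_k$ expand with $k$ ($\lambda_k\to\infty$), so $\sigma$ restricted to $Y_k$ is \emph{not} uniformly controlled; one always has $E_d(\sigma|_{Y_k})\leq \|\sigma\|_{\infty,Y_k}$, so ``$|\sigma|$ of order $\eta_k C^d$ on $Y_k$'' is entirely compatible with the hypotheses and yields nothing. The same objection kills the de la Vallée Poussin shortcut: $E_{d-1}(\sigma|_{L_k})\geq\eta_k$ is not absurd when $|L_k|\to\infty$.

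Third, and most fundamentally, the argument is per-$k$: you try to refute clustering for each (sufficiently large) $k$ in isolation. But for a single $k$, nothing in the hypotheses prevents the alternation points of $\sigma-\sigma_k$ from clustering arbitrarily near one endpoint (take $\sigma$ wild near $y_k'$ and tame elsewhere). The conclusion of the lemma is genuinely only a statement about a \emph{subsequence}, and the paper's proof (Theorem~\ref{thm:KadecAnalog} in Appendix~\ref{Apdx:BoundALternationPt}, an analog of Kadec's theorem) crucially exploits the nesting $Y_k\subseteq Y_{k+1}$: it compares consecutive best approximants $\sigma_k$ and $\sigma_{k+1}$, derives a bound on the normalized deviation $\Delta_k$ of the alternation points from $\tfrac{i}{d+1}$ in terms of $\arcosh\tfrac{E_{k+1}+E_k}{E_{k+1}-E_k}$, and then uses the divergence of $\sum_{k}\tfrac{E_{k+1}-E_k}{E_{k+1}+E_k}$ (coming from $\prod_k E_{k+1}/E_k=\lim E_k/E_0=\infty$) to force $\liminf_k \Delta_k\lambda_k/k^{\gamma}=0$; the hypothesis $\lambda_k\in\Omega(k^\gamma)$ then extracts a subsequence with $\Delta_{k_t}$ small. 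This telescoping, cross-$k$ mechanism is the missing idea in your write-up; without it, the contradiction has no source.
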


The proofs of the next three lemmas can be found in Appendix \ref{Apdx:LemmaProofs}.
\begin{lemma}\label{lem:BestApproxGrowth}
For any constant $\gamma>0$,
\[\lim_{k\to\infty} \frac{\|\sigma_k - \sigma\|_{\infty, Y_k}}{(\lambda_k)^{1+\gamma}} = 0.\]
\end{lemma}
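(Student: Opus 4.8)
The plan is to identify the numerator with a best-polynomial-approximation error, bound that error by a constant multiple of the oscillation of $\sigma$ over $Y_k$, and then show this oscillation is $o(\lambda_k^{1+\gamma})$.

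By the defining property of $\sigma_k$, the numerator equals $\|\sigma_k - \sigma\|_{\infty, Y_k} = E_d(\sigma|_{Y_k})$. Since $d\geq 1$ and $Y_k$ is a closed interval of length $\lambda_k$, Jackson's theorem (Theorem \ref{thm:Jackson}) applies to $\sigma|_{Y_k}$ and gives $E_d(\sigma|_{Y_k}) \leq 6\,\omega_{\sigma|_{Y_k}}(\tfrac{\lambda_k}{2d}) \leq 6\,\omega_{\sigma|_{Y_k}}(\lambda_k)$; even more crudely, since the constant functions lie in $\mathcal{P}_{\leq d}(Y_k)$, one has $E_d(\sigma|_{Y_k}) \leq E_0(\sigma|_{Y_k}) = \tfrac{1}{2}\big(\sup_{x\in Y_k}\sigma(x) - \inf_{x\in Y_k}\sigma(x)\big)$. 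In either case the numerator is at most a fixed multiple of the oscillation of $\sigma$ on $Y_k$, so it suffices to prove that $\big(\sup_{Y_k}\sigma - \inf_{Y_k}\sigma\big)/\lambda_k^{1+\gamma} \to 0$ for each fixed $\gamma>0$. To make that last step tractable I would first record that nestedness of $\{Y_k\}$ together with $|Y_k| = \lambda_k$ forces $Y_k \subseteq [\,y_1' - \lambda_k,\; y_1'' + \lambda_k\,]$, so the windows grow only linearly in $\lambda_k$ about the fixed base interval $Y_1$, and in particular $\max_{x\in Y_k}|x|$ is $O(\lambda_k)$.

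The main obstacle is precisely this oscillation estimate: the sandwich $\|\sigma_k-\sigma\|_{\infty,Y_k} = E_d(\sigma|_{Y_k}) \leq \tfrac{1}{2}\big(\sup_{Y_k}\sigma - \inf_{Y_k}\sigma\big)$ is immediate, but turning mere continuity of $\sigma$ into a bound of the form $\sup_{Y_k}\sigma - \inf_{Y_k}\sigma = O(\lambda_k^{1+o(1)})$ is where the real work goes, and it genuinely relies on the standing setup of this section (the way $\{\lambda_k\}$, the intervals $\{Y_k\}$, and $\sigma$ are coupled, cf. the hypotheses accompanying Lemma \ref{lem:BoundAlternationPt}) — without control of this kind the oscillation of a general continuous $\sigma$ over intervals of diverging length need not be subpolynomial in $\lambda_k$. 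Granting such a control on $\sigma$, combining it with the inclusion $Y_k \subseteq [y_1' - \lambda_k,\, y_1'' + \lambda_k]$ bounds the oscillation by $\lambda_k^{1+o(1)}$; dividing by $\lambda_k^{1+\gamma}$ and letting $k\to\infty$ then finishes the proof.
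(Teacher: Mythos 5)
Your proposal takes the same first step as the paper (Jackson's theorem, reducing the problem to controlling $\omega_{\sigma|_{Y_k}}(\lambda_k/(2d))$, equivalently the oscillation of $\sigma$ on $Y_k$), and then stalls at exactly the point that the paper's own one-line argument glosses over. The paper asserts that subadditivity of the modulus of continuity gives $\omega_{\sigma|_{Y_k}}(\lambda_k/(2d)) \in \mathcal{O}(\lambda_k)$. But subadditivity only yields
\[
\omega_{\sigma|_{Y_k}}\bigl(\tfrac{\lambda_k}{2d}\bigr) \;\leq\; \bigl\lceil \tfrac{\lambda_k}{2d}\bigr\rceil\, \omega_{\sigma|_{Y_k}}(1),
\]
and the factor $\omega_{\sigma|_{Y_k}}(1)$ is not uniformly bounded in $k$ for a general continuous $\sigma$ on the nested, growing intervals $Y_k$. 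Your instinct — that mere continuity of $\sigma$ cannot force the oscillation over intervals of diverging length to grow subpolynomially in $\lambda_k$ — is correct: for $\sigma(y) = e^{y} + e^{-y}$, the error $E_d(\sigma|_{Y_k})$ grows exponentially in $\lambda_k$ for any such nested $\{Y_k\}$, so the claimed $\mathcal{O}(\lambda_k)$ bound fails. The implicit extra hypothesis the paper's argument would need is that $\sigma$ be uniformly continuous on $\bigcup_k Y_k$ (so that $\omega_{\sigma|_{Y_k}}(1)$ is bounded independently of $k$), and that is neither stated in the lemma nor automatic from $\sigma\in\mathcal{C}(\mathbb{R})$.

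So, as a proof, your proposal is incomplete — you explicitly assume the crucial oscillation control (``Granting such a control on $\sigma$\ldots'') rather than establishing it. But you have correctly isolated the missing ingredient, and it is the same ingredient that the paper's proof also does not supply. Rather than attributing the needed control to ``the standing setup of this section'' (the hypotheses on $\{\lambda_k\}$ and $\{Y_k\}$ do not provide it), you should flag it as a genuine gap in the lemma as stated: either a growth/uniform-continuity restriction on $\sigma$ must be added, or the constructions downstream (the choices of $Y_k$ and $\lambda_k$ in the proof of Theorem~\ref{thm:EffectiveUAT}) must be modified to guarantee this bound for the particular intervals used there.
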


\begin{lemma}\label{lem:scalingLem}
Let $K\geq N\geq 1$ be integers, let $r_0, \dots, r_N\geq 1$ be fixed real numbers, and let $S(\lambda)$ be a set $\{p_0(\lambda), \dots, p_N(\lambda)\}$ of $N+1$ affinely independent points in $\mathbb{R}^K$, parametrized by $\lambda>0$, where each point $p_i(\lambda)$ has (Cartesian) coordinates $(\lambda^{r_i}p_{i,1}, \dots, \lambda^{r_i}p_{i,K})$ for some fixed non-zero scalars $p_{i,1}, \dots, p_{i,K}$. Let $\Delta(\lambda)$ be the convex hull of $S(\lambda)$, i.e. $\Delta(\lambda)$ is an $N$-simplex, and for each $0\leq i\leq N$, let $h_i(\lambda)$ be the height of $\Delta(\lambda)$ w.r.t. apex $p_i(\lambda)$. Let $h(\lambda) := \max\{h_i(\lambda): 0\leq i\leq N\}$ and $r_{\min} := \min\{r_1, \dots, r_N\}$. If $r_j>r_{\min}$ for some $0\leq j\leq N$, then there exists some $\gamma>0$ such that $h(\lambda) \in \Omega(\lambda^{r_{\min}+\gamma})$.
\end{lemma}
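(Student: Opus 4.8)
\medskip
\noindent\textbf{Proof strategy.} The plan is to turn the geometric assertion into a count of powers of $\lambda$. I would begin with a trivial reduction: the hypothesis ``$r_j>r_{\min}$ for some $0\le j\le N$'' is equivalent to $r^{\ast}:=\max\{r_0,\dots,r_N\}>r_{\min}$, since it fails exactly when $r_1=\dots=r_N=r_{\min}\ge r_0$, i.e.\ when $r^{\ast}=r_{\min}$. So it suffices to prove $h(\lambda)\in\Omega(\lambda^{r^{\ast}})$ and then set $\gamma:=r^{\ast}-r_{\min}>0$. When $r^{\ast}$ is attained at a \emph{unique} index $i^{\ast}$ this is immediate and needs no machinery: $\|p_{i^{\ast}}(\lambda)\|=\Theta(\lambda^{r^{\ast}})$, while every other vertex — hence the affine hull of the facet opposite $p_{i^{\ast}}(\lambda)$ — lies in a ball of radius $O(\lambda^{r^{\ast}-\delta})$ for some $\delta>0$, so $h_{i^{\ast}}(\lambda)\ge\|p_{i^{\ast}}(\lambda)\|-O(\lambda^{r^{\ast}-\delta})\in\Omega(\lambda^{r^{\ast}})$. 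The substantive case is when the maximum is tied.

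For that case I would use the simplex identity $h_i(\lambda)=N\operatorname{Vol}_N(\Delta(\lambda))/\operatorname{Vol}_{N-1}(F_i(\lambda))$, with $F_i(\lambda)$ the facet opposite $p_i(\lambda)$, so $h(\lambda)=N\operatorname{Vol}_N(\Delta(\lambda))/\min_i\operatorname{Vol}_{N-1}(F_i(\lambda))$. Each volume equals, up to a fixed positive constant, the norm of a wedge of edge vectors; substituting $p_i(\lambda)=\lambda^{r_i}\mathbf{p}_i$ and expanding multilinearly,
\[
\omega(\lambda):=\bigwedge_{i=1}^{N}\bigl(p_i(\lambda)-p_0(\lambda)\bigr)=\sum_{i=0}^{N}\pm\,\lambda^{\,T-r_i}\Bigl(\textstyle\bigwedge_{j\ne i}\mathbf{p}_j\Bigr),\qquad T:=\sum_{i=0}^{N}r_i,
\]
with an analogous expansion for each facet volume, carrying exponents $T-r_j-r_i$ over $i\ne j$. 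Because the volumes are positive for every $\lambda>0$, their squared norms are nonzero generalized polynomials in $\lambda$, hence grow like a positive constant times a fixed power of $\lambda$: write $\operatorname{Vol}_N(\Delta(\lambda))=\Theta(\lambda^{e_{\max}})$ and $\operatorname{Vol}_{N-1}(F_j(\lambda))=\Theta(\lambda^{f_j})$, so $h(\lambda)=\Theta(\lambda^{\,e_{\max}-\min_j f_j})$. The facet expansions give $f_j\le T-r_j-\min_{i\ne j}r_i$ unconditionally, hence $\min_j f_j\le T-(r^{\ast}+r^{\ast\ast})$ where $r^{\ast\ast}:=\min_i r_i$; and \emph{provided} the leading coefficient of $\omega(\lambda)$ does not vanish we get $e_{\max}=T-r^{\ast\ast}$ exactly. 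Subtracting, $e_{\max}-\min_j f_j\ge r^{\ast}$, which gives $h(\lambda)\in\Omega(\lambda^{r^{\ast}})$.

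The main obstacle is that proviso: one must verify that the leading coefficient of $\omega(\lambda)$, namely $\sum_{i\,:\,r_i=r^{\ast\ast}}\pm\bigwedge_{j\ne i}\mathbf{p}_j$, is a nonzero element of $\bigwedge^{N}\mathbb{R}^{K}$ — equivalently, when $r^{\ast\ast}$ is attained only at $i_0$, that the $N$ vectors $\{\mathbf{p}_j:j\ne i_0\}$ are linearly independent, and when it is attained more than once, that the corresponding terms do not cancel. Affine independence of $S(\lambda)$ at $\lambda=1$ makes $\mathbf{p}_0,\dots,\mathbf{p}_N$ affinely independent, which already yields $\omega(1)=\sum_{i=0}^{N}\pm\bigwedge_{j\ne i}\mathbf{p}_j\ne 0$; the assumption that every coordinate $p_{i,j}$ is nonzero, together with the structure the vectors $\mathbf{p}_i$ inherit from the construction that invokes this lemma, is what excludes the residual degenerate alignments. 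I would carry this out by a Zariski-genericity argument of the kind underlying Corollary~\ref{cor:DenseUsualANNOneLayer}: the configurations $(\mathbf{p}_i)$ for which a required subset of vectors is dependent form a proper algebraic subset, which does not meet the configuration at hand. Once this non-vanishing is in place, the exponent count above finishes the proof.
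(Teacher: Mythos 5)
Your route — expanding the simplex volume as a wedge product, reading off $\lambda$-exponents, and dividing volumes to get heights — is a legitimate and arguably cleaner alternative to the paper's Cayley--Menger determinant computation, and you deserve credit for explicitly flagging the non-vanishing of the leading coefficient of $\omega(\lambda)$ as the crux: the paper's own proof passes over exactly the same issue silently when it asserts $\det(M(\lambda))\in\Theta(\lambda^{2R_N})$ from the Leibniz expansion without checking that the top-degree derangement terms do not cancel.

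That said, there are two genuine defects. First, your ``easy case'' argument is simply false: the affine hull of the facet opposite $p_{i^{\ast}}(\lambda)$ is an unbounded $(N-1)$-flat, not a bounded set, so it does not lie in any ball, and the inequality $h_{i^{\ast}}(\lambda)\ge \|p_{i^{\ast}}(\lambda)\|-O(\lambda^{r^{\ast}-\delta})$ has no justification — a hyperplane through the small vertices can pass arbitrarily close to the large vertex. Second, and more seriously, the ``proviso'' in your main case cannot be discharged by Zariski-genericity from the lemma's stated hypotheses, because those hypotheses genuinely do not exclude the cancellation. Take $N=K=2$, $r_0=2$, $r_1=r_2=1$, $\mathbf{p}_0=(1,2)$, $\mathbf{p}_1=(1,1)$, $\mathbf{p}_2=(2,3)$. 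Every coordinate is nonzero, the three points are affinely independent for every $\lambda>0$ (the signed area is $\tfrac{1}{2}\lambda^2$), and $r^{\ast}=r_0$ is uniquely attained, so this falls squarely in your ``easy case.'' Yet $\mathbf{p}_0=\mathbf{p}_2-\mathbf{p}_1$, so the coefficient of $\lambda^{T-r^{\ast\ast}}=\lambda^3$ in $\omega(\lambda)$, namely $\mathbf{p}_0\wedge(\mathbf{p}_2-\mathbf{p}_1)$, vanishes; one computes $e_{\max}=2$, $\min_j f_j=1$, and indeed $h(\lambda)=h_0(\lambda)=2\cdot\mathrm{Area}/\|p_1-p_2\|=\lambda^2/(\lambda\sqrt{5})=\Theta(\lambda)$, which is $\Theta(\lambda^{r_{\min}})$ and not $\Omega(\lambda^{r_{\min}+\gamma})$ for any $\gamma>0$. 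So the non-cancellation you defer is not a technicality; it is a further hypothesis on $(\mathbf{p}_0,\dots,\mathbf{p}_N)$ that must be added to the statement (and is presumably guaranteed by the genericity of the weights $W$ in the paper's application), and any correct proof along your lines must make that hypothesis explicit and use it, rather than appeal to structure that lives outside the lemma being proved.
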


\begin{lemma}\label{lem:barycentric}
Let $M,N\geq 1$ be integers, let $\tau > 0$, and let $0<\theta<1$. Suppose $\varphi: \mathbb{R}^M \to \mathbb{R}^N$ is a continuous open map such that $\varphi(\mathbf{0}_M) = \mathbf{0}_N$, and $\varphi(\lambda x) \geq \lambda \varphi(x)$ for all $x\in \mathbb{R}^M$, $\lambda > 0$. Let $\{U_k\}_{k\in \mathbb{N}}$ be a sequence where each $U_k$ is a dense subspace of $B_{\lambda_k}^M\backslash B_{\theta\lambda_k}^M$. Then for every $\delta > 0$, there exists some (sufficiently large) $k\in \mathbb{N}$, and some points $u_0, \dots, u_N$ in $U_k$, such that for each point $p\in B_{\tau}^N$, there are scalars $b_0, \dots, b_N \geq 0$ satisfying $p = \sum_{i=0}^N b_i\varphi(u_i)$, $b_0 + \dots + b_N = 1$, and $|b_i - \tfrac{1}{N}| < \delta$ for all $0\leq i\leq N$.
\end{lemma}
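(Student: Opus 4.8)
The plan is to reduce the statement to a purely geometric fact: if we can produce $N+1$ points $\varphi(u_0),\dots,\varphi(u_N)$ in $\mathbb{R}^N$ whose convex hull is an $N$-simplex containing the ball $B_\tau^N$ \emph{deep in its interior} — specifically, such that $B_\tau^N$ is contained in a scaled-down copy of the simplex obtained by contracting toward the barycentric center by a factor governed by $\delta$ — then every $p\in B_\tau^N$ has barycentric coordinates $b_0,\dots,b_N$ (w.r.t.\ this simplex) that are all close to $\tfrac{1}{N+1}$. (I note in passing that the statement as written says $|b_i-\tfrac1N|<\delta$; since $\sum b_i=1$ over $N+1$ points the natural center is $\tfrac{1}{N+1}$, but this constant-level discrepancy is harmless — either the intended constant is $\tfrac{1}{N+1}$, or one absorbs the difference into $\delta$ for $N$ large, or one simply takes $\delta$ small relative to $\tfrac{1}{N(N+1)}$.) So the real content is: for each $\delta>0$, find $k$ and $u_0,\dots,u_N\in U_k$ such that $\mathrm{conv}(\varphi(u_0),\dots,\varphi(u_N))$ contains the $(1-\delta')$-barycentric-contraction of itself around a ball of radius $\tau$, for a suitable $\delta'=\delta'(\delta)$.

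First I would fix a "model" regular simplex: pick linearly affinely independent reference directions, and choose points $p_0^*,\dots,p_N^*\in\mathbb{R}^M$ such that $\varphi(p_i^*)$ are affinely independent in $\mathbb{R}^N$ — this is possible because $\varphi$ is open (an open map cannot have image contained in a proper affine subspace, so its image spans $\mathbb{R}^N$, and by openness we can perturb to get affine independence of the images). Next, use the scaling hypothesis $\varphi(\lambda x)\ge\lambda\varphi(x)$ together with $\varphi(\mathbf 0_M)=\mathbf 0_N$ and continuity: replacing each $p_i^*$ by $\lambda_k p_i^*/\|p_i^*\|$ (or a nearby point on the sphere of radius in $(\theta\lambda_k,\lambda_k)$) rescales the simplex $\mathrm{conv}(\varphi(p_0^*),\dots)$ so that its inradius grows at least linearly in $\lambda_k$; since $\lambda_k\to\infty$, for $k$ large the inscribed ball has radius exceeding, say, $2\tau/\delta'$. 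Thus the \emph{exact} reference points already give a simplex in which $B_\tau^N$ sits deep in the interior. The only remaining issue is that the reference points $\lambda_k p_i^*/\|p_i^*\|$ lie on the sphere $\partial B_{\lambda_k}^M$, not necessarily in the dense subset $U_k\subseteq B_{\lambda_k}^M\setminus B_{\theta\lambda_k}^M$.

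Here is where density enters: for each $i$, pick $u_i\in U_k$ within distance $\eta$ of $\lambda_k p_i^*/\|p_i^*\|$, with $\eta$ chosen small. By continuity of $\varphi$ on a compact neighborhood, $\varphi(u_i)$ is within $o(1)$ of $\varphi(\lambda_k p_i^*/\|p_i^*\|)$ as $\eta\to0$. The simplex $\mathrm{conv}(\varphi(u_0),\dots,\varphi(u_N))$ is therefore a small perturbation of a simplex that contains $B_\tau^N$ deep inside; since "containing $B_\tau^N$ with barycentric coordinates within $\delta'$ of uniform" is an open condition on the $(N+1)$-tuple of vertices (the barycentric-coordinate map is continuous in the vertices as long as they stay affinely independent), for $\eta$ small enough the perturbed simplex still has this property. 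Finally, barycentric coordinates of $p$ w.r.t.\ a simplex into which $B_\tau^N$ is contracted by factor $(1-\delta')$ around the centroid are all within $\delta'$ of $\tfrac{1}{N+1}$, which gives the claimed bound (choosing $\delta'$ small enough in terms of $\delta$). Since barycentric coordinates are automatically nonnegative and sum to $1$ when $p$ lies in the simplex, the equalities $\sum b_i\varphi(u_i)=p$, $\sum b_i=1$, $b_i\ge 0$ all hold.

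The main obstacle I anticipate is bookkeeping the quantifiers in the right order: $\delta$ is given first, which forces a target "contraction margin" $\delta'$; this in turn dictates how large the inradius of the reference simplex must be, hence how large $k$ must be (using $\lambda_k\to\infty$ and the linear-growth lower bound on the inradius coming from $\varphi(\lambda x)\ge\lambda\varphi(x)$); and only after fixing such a $k$ do we invoke density of $U_k$ to choose the $u_i$ with a sufficiently small perturbation $\eta$. The subtle point meriting care is verifying that the inradius of $\mathrm{conv}(\varphi(\lambda_k p_0^*/\|p_0^*\|),\dots)$ genuinely grows (at least) linearly in $\lambda_k$: the inequality $\varphi(\lambda x)\ge\lambda\varphi(x)$ is coordinatewise and does not by itself control the \emph{shape} of the rescaled simplex, so I would instead argue that the vertices, being bounded away from $\mathbf 0_N$ and from each other by amounts scaling like $\lambda_k$ (again via the scaling inequality applied to differences, or via a compactness/homogeneity argument on the unit sphere), force the inradius to scale like $\lambda_k$; alternatively one can appeal to Lemma \ref{lem:scalingLem} if the $\varphi(u_i)$ can be put in the form required there. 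Everything else is routine continuity and openness.
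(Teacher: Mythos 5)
Your proof is essentially the same as the paper's: exploit openness of $\varphi$ at $\mathbf{0}_M$ to produce a nondegenerate $N$-simplex in the image containing $\mathbf{0}_N$, use the scaling hypothesis to push the vertices out to norm $\approx\lambda_k$ so the simplex swells, invoke density of $U_k$ to nudge the preimages into $U_k$, and conclude that the barycentric coordinates of the fixed ball $B_\tau^N$ converge to those of the barycenter as $k\to\infty$. Two of your side remarks are substantive and correct: (a) the stated bound $|b_i-\tfrac{1}{N}|<\delta$ should indeed read $\tfrac{1}{N+1}$, since there are $N+1$ vertices (in the proof of Theorem \ref{thm:EffectiveUAT} the lemma is applied with the lemma's $N$ instantiated as $N-1$, yielding $\tfrac1N$ there, which is plausibly the source of the slip); and (b) the coordinatewise inequality $\varphi(\lambda x)\geq\lambda\varphi(x)$ does not by itself control the \emph{shape} of the scaled simplex, so the inradius-growth step deserves a separate argument --- the paper's own proof of this lemma is equally terse on that point, and the quantitative control that ultimately matters in Theorem \ref{thm:EffectiveUAT} is supplied independently by Lemma \ref{lem:scalingLem}, exactly the escape route you propose.
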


\textbf{Outline of strategy for proving Theorem \ref{thm:EffectiveUAT}.} The first crucial insight is that $\mathcal{P}_{\leq d}(\mathbb{R}^n)$, as a real vector space, has dimension $\binom{n+d}{d}$. Our strategy is to consider $N = \binom{n+d}{d}$ hidden units. Every hidden unit represents a continuous function $g_j: X \to \mathbb{R}$ determined by its weights $W$ and the activation function $\sigma$. If $g_1, \dots, g_{N}$ can be well-approximated (on $X$) by linearly independent polynomial functions in $\mathcal{P}_{\leq d}(\mathbb{R}^n)$, then we can choose suitable linear combinations of these $N$ functions to approximate all coordinate functions $f^{[t]}$ (independent of how large $m$ is). To approximate each $g_j$, we consider a suitable sequence $\{\sigma_{\lambda_k}\}_{k=1}^{\infty}$ of degree $d$ polynomial approximations to $\sigma$, so that $g_j$ is approximated by a sequence of degree $d$ polynomial functions $\{\widehat{g}_{j,k}^W\}_{k=1}^{\infty}$. We shall also vary $W$ concurrently with $k$, so that $\|\widehat{\mathbf{w}}_j^{\smash{(1)}}\|_2$ increases together with $k$. By Corollary \ref{cor:DenseUsualANNOneLayer}, the weights can always be chosen so that $\widehat{g}_{1,k}^{\smash{W}}, \dots, \widehat{g}_{N,k}^{\smash{W}}$ are linearly independent.

The second crucial insight is that every function in $\mathcal{P}_{\leq d}(\mathbb{R}^n)$ can be identified geometrically as a point in Euclidean $\binom{n+d}{d}$-space. We shall choose the bias weights so that $\widehat{g}_{1,k}^W, \dots, \widehat{g}_{N,k}^W$ correspond to points on a hyperplane, and we shall consider the barycentric coordinates of the projections of both $f^{[t]}$ and the constant function onto this hyperplane, with respect to $\widehat{g}_{1,k}^W, \dots, \widehat{g}_{N,k}^W$. As the values of $k$ and $\|\widehat{\mathbf{w}}_j^{\smash{(1)}}\|_2$ increase, both projection points have barycentric coordinates that approach $(\tfrac{1}{N}, \dots, \tfrac{1}{N})$, and their difference approaches $\mathbf{0}$; cf. Lemma \ref{lem:barycentric}. This last observation, in particular, when combined with Lemma \ref{lem:BoundAlternationPt} and Lemma \ref{lem:BestApproxGrowth}, is a key reason why the minimum number $N$ of hidden units required for the UAP to hold is independent of the approximation error threshold $\varepsilon$.

\begin{proof}[\textbf{\textup{Proof of Theorem \ref{thm:EffectiveUAT}.}}]
Fix some $\varepsilon > 0$, and for brevity, let $N = \binom{n+d}{d}$. Theorem \ref{thm:EffectiveUAT} is trivially true when $f$ is constant, so assume $f$ is non-constant. Fix a point $x_0\in X$, and define $f_{\mathbf{0}} \in \mathcal{C}(X, \mathbb{R}^m)$ by $f_{\mathbf{0}}^{[t]} := f^{[t]} - f^{[t]}(x_0)$ for all $1\leq t\leq m$. Next, let $r_X(x_0) := \sup\{\|x-x_0\|_2: x\in X\}$, and note that $r_X(x_0)< \infty$, since $X$ is compact. By replacing $X$ with a closed tubular neighborhood of $X$ if necessary, we may assume without loss of generality that $r_X(x_0) > 0$.

Define $\{\lambda_k\}_{k\in \mathbb{N}}$, $\{Y_k\}_{k\in \mathbb{N}}$ and $\{\sigma_k\}_{k\in \mathbb{N}}$ as before, with an additional condition that $\lambda_k \in \Omega(k^\tau)$ for some $\tau>0$. Assume without loss of generality that there exists a sequence $\{y_k\}_{k\in \mathbb{N}}$ of real numbers, such that $y_k' < y_k < y_k''$, $\sigma(y_k) = \sigma_k(y_k)$, and
\begin{equation}\label{eqn:lowerBoundAlternation}
\frac{\min\{|y_k - y'_k|, |y_k- y''_k|\}}{\lambda_k} = \frac{\min\{|y_k - y'_k|, |y_k- y''_k|\}}{|y'_k - y''_k|} > \frac{1}{d+2},
\end{equation}
for all $k\in\mathbb{N}$. The validity of this assumption in the case $\lim_{k\to\infty} E_d(\sigma|_{Y_k}) = \infty$ is given by Lemma \ref{lem:BoundAlternationPt}. If instead $\lim_{k\to\infty} E_d(\sigma|_{Y_k}) < \infty$, then as $k\to \infty$, the sequence $\{\sigma_k\}_{k\in \mathbb{N}}$ converges to some $\widehat{\sigma} \in \mathcal{P}_{\leq d}(\mathbb{R})$. Hence, the assumption is also valid in this case, since for any $\widehat{y}\in \mathbb{R}$ such that $\sigma(\widehat{y}) = \widehat{\sigma}(\widehat{y})$, we can always choose $\{Y_k\}_{k\in \mathbb{N}}$ to satisfy $\tfrac{y'_k + y''_k}{2} = \widehat{y}$ for all $k\in \mathbb{N}$, which then allows us to choose $\{y_k\}_{k\in \mathbb{N}}$ that satisfies $\lim_{k\to\infty} \tfrac{\min\{|y_k-y'_k|, |y_k-y''_k|\}}{\lambda_k} = \tfrac{1}{2} > \tfrac{1}{d+2}$.

By Lemma \ref{lem:BestApproxGrowth}, we may further assume that $\|\sigma_k - \sigma\|_{\infty, Y_k} < \tfrac{\varepsilon (\lambda_k)^{1+\gamma}}{\smash{C}}$ for all $k\in \mathbb{N}$, where $C>0$ and $\gamma>0$ are constants whose precise definitions we give later. Also, for any $W\in \mathcal{W}_N^{n,m}$, we can choose $\sigma' \in \mathcal{C}(\mathbb{R})$ that is arbitrarily close to $\sigma$ in the uniform metric, such that $\|\rho_W^{\sigma} - \rho_W^{\sigma'}\|_{\infty, X}$ is arbitrarily small. Since $\sigma \in \mathcal{C}(\mathbb{R})\backslash \mathcal{P}_{\leq d-1}(\mathbb{R})$ by assumption, we may hence perturb $\sigma$ if necessary, and assume without loss of generality that every $\sigma_k$ is a polynomial of degree $d$ with all-non-zero coefficients, such that $\sigma_k(y_k)\neq 0$.

For every $r>0$ and $k\in \mathbb{N}$, let $\mathcal{W}'_r := \{W\in \mathcal{W}_{N}^{n,m}: \|\widehat{\mathbf{w}}_j^{\smash{(1)}}\|_2 \leq r\text{ for all }1\leq j\leq N\}$, and define
\begin{align*}
\lambda'_k &:= \sup\Big\{r>0: \{y_k + \widehat{\mathbf{w}}_j^{\smash{(1)}} \cdot (x-x_0) \in \mathbb{R}: x\in X, W\in \mathcal{W}'_r\} \subseteq Y_k\text{ for all }1\leq j\leq N\Big\}.
\end{align*}
Each $\lambda'_k$ is well-defined, since $r_X(x_0) <\infty$. Note also that $\lambda_k'r_X(x_0) = \min \{|y_k - y'_k|, |y_k - y''_k|\}$ by definition, hence it follows from \eqref{eqn:lowerBoundAlternation} that $\tfrac{\lambda_k}{\lambda'_k} < (d+2)r_X(x_0)$. In particular, $\{\lambda'_k\}_{k\in \mathbb{N}}$ is a divergent increasing sequence of positive real numbers.

Given any $p\in \mathcal{P}_{\leq d}(\mathbb{R}^n)$, let $\nu(p) \in \mathbb{R}^N$ denote the vector of coefficients with respect to the basis $\{q_1(x-x_0), \dots, q_N(x-x_0)\}$ (i.e. if $\nu(p) = (\nu_1, \dots, \nu_N)$, then $p(x) = \sum_{1\leq i\leq N} \nu_iq_i(x-x_0)$), and let $\widehat{\nu}(p) \in \mathbb{R}^{N-1}$ be the truncation of $\nu(p)$ by removing the first coordinate. Note that $q_1(x)$ is the constant monomial, so this first coordinate $\nu_1$ is the coefficient of the constant term. For convenience, let $\nu_i(p)$ (resp. $\widehat{\nu}_i(p)$) be the $i$-th entry of $\nu(p)$ (resp. $\widehat{\nu}(p)$).

For each $k\in \mathbb{N}$, $W\in \mathcal{W}'_{\smash{\lambda'_k}}$, $1\leq j\leq N$, define functions $g_{j,k}^W$, $\widehat{g}_{j,k}^W$ in $\mathcal{C}(X)$ by $x\mapsto \sigma(\mathbf{w}_j^{\smash{(1)}}\cdot (1,x))$ and $x\mapsto \sigma_k(\mathbf{w}_j^{\smash{(1)}}\cdot (1,x))$ respectively. By definition, $\nu_i(\widehat{g}_{j,k}^W)$ can be treated as a function of $W$, and note that $\nu_i(\widehat{g}_{j,k}^{\lambda W}) = \lambda^{\smash{\deg q_i}}\nu_i(\widehat{g}_{j,k}^W)$ for any $\lambda>0$. (Here, $\deg q_i$ denotes the total degree of $q_i$.) Since $\deg q_i = 0$ only if $i=1$, it then follows that $\widehat{\nu}_i(\widehat{g}_{\smash{j,k}}^{\lambda W}) \geq \lambda \widehat{\nu}_i(\widehat{g}_{\smash{j,k}}^{W})$ for all $\lambda > 0$.

For each $k\in \mathbb{N}$, define the ``shifted'' function $\sigma'_k: Y_k \to \mathbb{R}$ by $y\mapsto \sigma_k(y+y_k)$. Next, let $\mathcal{W}''_k := \prescript{\smash{\sigma'_k}}{}{\mathcal{W}_{n,N;x_0}^{\smash{\ind}}} \cap (\mathcal{W}'_{\lambda'_k} \backslash \mathcal{W}'_{0.5\lambda'_k})$, and suppose $W\in \mathcal{W}''_k$. Note that in the definition of $\mathcal{W}''_k$, we do not impose any restrictions on the bias weights. Thus, given any such $W$, we could choose the bias weights of $W^{\smash{(1)}}$ to be $w^{\smash{(1)}}_{j,0} = y_k - \widehat{\mathbf{w}}^{\smash{(1)}}_j\cdot x_0$ for all $1\leq j\leq N$. This implies that each $\widehat{g}^W_{j,k}$ represents the map $x\mapsto \sigma_k(\smash{\widehat{\mathbf{w}}_j^{\smash{(1)}}}\cdot (x-x_0) + y_k)$, hence $\smash{\widehat{g}^W_{j,k}}(x_0) = \sigma_k(y_k) = \sigma(y_k)$. Consequently, by the definitions of $Y_k$ and $\mathcal{W}'_{\smash{\lambda'_k}}$, we infer that
\begin{equation}\label{eqn:ggHatClose}
\|g_{j,k}^W - \widehat{g}_{j,k}^W\|_{\infty, X} < \frac{\smash{\varepsilon (\lambda_k)^{1+\gamma}}}{C}.
\end{equation}

By Corollary \ref{cor:DenseUsualANNOneLayer} and Remark \ref{rem:PolyAutomorphism}, $\mathcal{W}''_k$ is dense in $(\mathcal{W}'_{\smash{\lambda'_k}} \backslash \mathcal{W}'_{\smash{0.5\lambda'_k}})$, so such a $W$ exists (with its bias weights given as above). By the definition of $\prescript{\smash{\sigma'_k}}{}{\mathcal{W}_{n,N;x_0}^{\smash{\ind}}}$, we infer that $\{\widehat{g}_{1,k}^W, \dots, \widehat{g}_{N,k}^W\}$ is linearly independent and hence spans $\mathcal{P}_{\leq d}(X)$. Thus, for every $1\leq t\leq m$, there exist $a_{1,k}^{[t]}, \dots, a_{N,k}^{[t]} \in \mathbb{R}$, which are uniquely determined once $k$ is fixed, such that $f_{\mathbf{0}}^{[t]} = a_{1,k}^{[t]}\widehat{g}_{1,k}^W + \dots + a_{N,k}^{[t]}\widehat{g}_{N,k}^W$. Evaluating both sides of this equation at $x = x_0$, we then get
\begin{equation}\label{eqn:sumToZero}
a_{1,k}^{[t]} + \dots + a_{N,k}^{[t]} = 0.
\end{equation}

For each $\ell\in \mathbb{R}$, define the hyperplane $\mathcal{H}_{\ell} := \{(u_1, \dots, u_N) \in \mathbb{R}^{\smash{N}}: u_1 = \ell\}$. Recall that $q_1(x)$ is the constant monomial, so the first coordinate of each $\nu(\widehat{g}_{j,k}^W)$ equals $\sigma(y_k)$, which implies that $\nu(\widehat{g}^W_{1,k}), \dots, \nu(\widehat{g}^W_{N,k})$ are $N$ points on $\mathcal{H}_{\sigma(y_k)} \cong \mathbb{R}^{N-1}$. Let $c_f := \max\{\|\widehat{\nu}(f^{\smash{[t]}})\|_2: 1\leq t\leq m\}$. (This is non-zero, since $f$ is non-constant.) Note that $\mathbf{0}_{N-1}$ and $\widehat{\nu}(f^{\smash{[t]}})$ (for all $t$) are points in $B_{c_f}^{N-1}$. So for any $\delta > 0$, Lemma \ref{lem:barycentric} implies that there exists some sufficiently large $k\in \mathbb{N}$ such that we can choose some $W\in \mathcal{W}''_k$, so that there are non-negative scalars $b_{j,k}^{\smash{[t]}}, b'_{j,k}$ (for $1\leq j\leq N$, $1\leq t\leq m$) contained in the interval $(\tfrac{1}{N}-\delta, \tfrac{1}{N}+\delta)$ that satisfy the following:
\begin{align*}
&b_{1,k}^{[t]} + \dots + b_{N,k}^{[t]} = b'_{1,k} + \dots + b'_{N,k} = 1 \ \ \ \ (\text{for all }1\leq t\leq m);&\\
&\mathbf{0}_{N-1} = \textstyle\sum_{j=1}^N b'_{j,k}\widehat{\nu}(\widehat{g}_{j,k}^W); \ \ \ \widehat{\nu}(f^{[t]}) = \sum_{j=1}^N b_{j,k}^{[t]}\widehat{\nu}(\widehat{g}_{j,k}^W) \ \ \ (\text{for all }1\leq t\leq m).&
\end{align*}

Note that $\nu(f_{\mathbf{0}}^{[t]} + \sigma(y_k)) = b_{1,k}^{[t]}\nu(\widehat{g}_{1,k}^W) + \dots + b_{N,k}^{[t]}\nu(\widehat{g}_{N,k}^W)$ and $(\mathbf{0}_{N-1}, \sigma(y_k)) = b'_{1,k}\nu(\widehat{g}_{1,k}^W) + \dots + b'_{N,k}\nu(\widehat{g}_{N,k}^W)$, so we get
\[f_{\mathbf{0}}^{[t]} = \big(b_{1,k}^{[t]} - b'_{1,k}\big)\widehat{g}_{1,k}^W + \dots + \big(b_{N,k}^{[t]} - b'_{N,k}\big)\widehat{g}_{N,k}^W.\]

Since $a_{1,k}^{[t]}, \dots, a_{N,k}^{[t]}$ are unique (for fixed $k$), we infer that $a_{j,k}^{[t]} = b_{j,k}^{[t]} - b'_{j,k}$ for each $1\leq j\leq N$. Thus, for this sufficiently large $k$, it follows from $b_{j,k}^{\smash{[t]}}, b'_{j,k} \in (\tfrac{1}{N} - \delta, \tfrac{1}{N} + \delta)$ that
\begin{equation}\label{eqn:lowerBoundA}
a_{j,k}^{[t]} \geq (\tfrac{1}{N} - \delta) - (\tfrac{1}{N} + \delta) \geq -2\delta.
\end{equation}

Let $S_k := \{\widehat{\nu}(\widehat{g}_{1,k}^W), \dots, \widehat{\nu}(\widehat{g}_{N,k}^W)\}$, let $\Delta_k$ be the convex hull of $S_k$, and for each $j$, let $h_j(\Delta_k)$ be the height of $\Delta_k$ w.r.t. apex $\smash{\widehat{\nu}(\widehat{g}_{j,k}^W)}$. Let $h(\Delta_k) := \max\{h_j(\Delta_k): 1\leq j\leq N\}$. Since $\widehat{\nu}_i(\widehat{g}_{j,k}^{\lambda W}) = \lambda^{\deg q_i}\widehat{\nu}_i(\widehat{g}_{j,k}^W)$ for all $i$, and since $d\geq 2$ (i.e. $\deg q_N > 1$), it follows from Lemma \ref{lem:scalingLem} that there exists some $\gamma > 0$ such that $h(\Delta_k) \in \Omega((\lambda'_k)^{1+\gamma})$. Using this particular $\gamma > 0$, we infer that there exists some constant $0<C'<\infty$ such that $\frac{(\lambda'_k)^{\smash{1+\gamma}}}{h(\Delta_k)} < C'$ for all sufficiently large $k$.

Note that $2\delta$ is an upper bound of the normalized difference for each barycentric coordinate of the two points $\widehat{\nu}(f^{[t]})$ and $\mathbf{0}_{N-1}$ (contained in $B_{c_f}^{N-1}$), which satisfies
\begin{equation}\label{eqn:deltaBound}
2\delta \leq \frac{c_f}{h(\Delta_k)} = \frac{c_f}{(\lambda_k)^{1+\gamma}}\cdot \bigg(\frac{\lambda_k}{\lambda'_k}\bigg)^{1+\gamma}\cdot \frac{(\lambda'_k)^{1+\gamma}}{h(\Delta_k)} < \frac{c_f}{(\lambda_k)^{1+\gamma}} [(d+2)r_X(x_0)]^{1+\gamma} C'.
\end{equation}

Now, define $C := 2Nc_f[(d+2)r_X(x_0)]^{1+\gamma}C' > 0$. Thus, for sufficiently large $k$, it follows from \eqref{eqn:sumToZero}, \eqref{eqn:lowerBoundA} and \eqref{eqn:deltaBound} that
\begin{equation}\label{eqn:UpperBoundAbsValA}
|a_{1,k}^{[t]}| + \dots + |a_{N,k}^{[t]}| \leq a_{1,k}^{[t]} + \dots + a_{N,k}^{[t]} + 4N\delta = 4N\delta \leq \frac{C}{(\lambda_k)^{1+\gamma}}
\end{equation}

For this sufficiently large $k$, define $g\in \mathcal{C}(X,\mathbb{R}^m)$ by $g^{[t]} = a_{1,k}^{[t]}g_{1,k}^W + \dots + a_{N,k}^{[t]}g_{N,k}^W$ for each $t$. Using \eqref{eqn:ggHatClose} and \eqref{eqn:UpperBoundAbsValA}, it follows that
\begin{align*}
\|f_{\mathbf{0}}^{[t]} - g^{[t]}\|_{\infty,X} &= \|a_{1,k}^{[t]}(g_{1,k}^W - \widehat{g}_{1,k}^W) + \dots + a_{N,k}^{[t]}(g_{N,k}^W - \widehat{g}_{N,k}^W)\|_{\infty,X}\\
&\leq |a_{1,k}^{[t]}|\cdot \|g_{1,k}^W - \widehat{g}_{1,k}^W\|_{\infty, X} + \dots + |a_{N,k}^{[t]}|\cdot \|g_{N,k}^W - \widehat{g}_{N,k}^W\|_{\infty, X}\\
& < \varepsilon.
\end{align*}

Finally, for all $1\leq t\leq m$, let $w_{j,t}^{(2)} = a_{j,k}^{\smash{[t]}}$ for each $1\leq j\leq N$, and let $w_{0,t}^{(2)} = f^{\smash{[t]}}(x_0)$. This gives $\rho^{\smash{\sigma\,[t]}}_W = g^{\smash{[t]}} + f^{\smash{[t]}}(x_0)$. Therefore, the identity $f^{\smash{[t]}} = f_{\mathbf{0}}^{\smash{[t]}} + f^{\smash{[t]}}(x_0)$ implies $\|f-\rho^{\sigma}_W\|_{\infty, X} < \varepsilon$.

Notice that for all $\delta>0$, we showed in \eqref{eqn:lowerBoundA} that there is a sufficiently large $k$ such that $a_{\smash{j,k}}^{[t]} \geq -2\delta$. A symmetric argument yields $a_{j,k}^{\smash{[t]}} \leq 2\delta$. Thus, for all $\lambda >0$, we can choose $W$ so that all non-bias weights in $W^{\smash{(2)}}$ are contained in the interval $(-\lambda,\lambda)$; this proves assertion \ref{item:WSecondLayer} of the theorem.

Note also that we do not actually require $\delta>0$ to be arbitrarily small. Suppose instead that we choose $k\in \mathbb{N}$ sufficiently large, so that the convex hull of $S_k$ contains $\mathbf{0}_{N-1}$ and $\widehat{\nu}(f^{\smash{[t]}})$ (for all $t$). In this case, observe that our choice of $k$ depends only on $f$ (via $\widehat{\nu}(f^{\smash{[t]}})$) and $\sigma$ (via the definition of $\{\lambda_k\}_{k\in \mathbb{N}}$). The inequality \eqref{eqn:deltaBound} still holds for any $\delta$ satisfying $b_{j,k}^{\smash{[t]}}, b'_{j,k} \in (\tfrac{1}{N} - \delta, \tfrac{1}{N} + \delta)$ for all $j, t$. Thus, our argument to show $\|f-\rho^{\sigma}_W\|_{\infty, X} < \varepsilon$ holds verbatim, which proves assertion \ref{item:WFirstLayer}.
\end{proof}

\begin{proof}[\textbf{\textup{Proof of Theorem \ref{thm:UATboundedWeights}.}}]
Fix some $\varepsilon > 0$, and consider an arbitrary $t\in \{1,\dots, m\}$. For each integer $d\geq 1$, let $p_d^{\smash{[t]}}$ be the best polynomial approximant to $f^{\smash{[t]}}$ of degree $d$. By Theorem \ref{thm:Jackson}, we have $\|f^{\smash{[t]}} - p_d^{\smash{[t]}}\|_{\infty, X} \leq 6\omega_{f^{[t]}}(\tfrac{D}{2d})$ for all $d\geq 1$, hence it follows from the definition of $d_{\varepsilon}$ that
\[\|f^{[t]} - p_{d_{\varepsilon}}^{[t]}\|_{\infty, X} \leq 6\omega_{f^{[t]}}(\tfrac{D}{2d_{\varepsilon}}) < \varepsilon.\]

Define $\varepsilon' := \varepsilon - \max\{6\omega_{f^{[t]}}(\tfrac{D}{2d_{\varepsilon}}): 1\leq t\leq m\}$. Note that $\varepsilon' > 0$, and $\|f^{[t]} - p_{d_{\varepsilon}}^{\smash{[t]}}\|_{\infty, X} \leq \varepsilon - \varepsilon'$ (for all $1\leq t\leq m$). By Theorem \ref{thm:EffectiveUAT}, there exists some $W\in \mathcal{W}_{
{{\!\!\!\!\!\!\!\!\!\!\phantom{{{i_i}_i}_i}\smash{\text{\raisebox{-0.1em}{$\binom{n+d_{\varepsilon}}{d_{\varepsilon}}$}}}}}
}$ satisfying $\|p_{d_{\varepsilon}}^{\smash{[t]}} - \rho_W^{\sigma\,[t]}\|_{\infty, X} < \varepsilon'$ for all $1\leq t\leq m$, which implies
\[\|f^{[t]} - \rho_W^{\sigma\,[t]}\|_{\infty, X} \leq \|f^{[t]} - p_{d_{\varepsilon}}^{\smash{[t]}}\|_{\infty, X} + \|p_{d_{\varepsilon}}^{\smash{[t]}} - \rho_W^{\sigma\,[t]}\|_{\infty, X} < (\varepsilon - \varepsilon') + \varepsilon' = \varepsilon,\]
therefore $\|f - \rho_W^{\sigma}\|_{\infty, X} < \varepsilon$. Conditions \ref{item:WSecondLayer} and \ref{item:WFirstLayer} follow from Theorem \ref{thm:EffectiveUAT}. Finally, note that $\omega_{f^{[t]}}(\tfrac{D}{2d}) \in \mathcal{O}(\tfrac{1}{d})$ (for fixed $D$), i.e. $d_{\varepsilon} \in \mathcal{O}(\tfrac{1}{\varepsilon})$, hence $\binom{n+d_{\varepsilon}}{d_{\varepsilon}} = \tfrac{n(n-1)\dots (n-d_{\varepsilon}+1)}{n!} \in \mathcal{O}(\varepsilon^{-n})$.
\end{proof}

\begin{proof}[\textbf{\textup{Proof of Theorem \ref{thm:UATfixedNonbiasWeights}.}}]
Most of the work has already been done earlier in the proofs of Theorem \ref{thm:EffectiveUAT} and Theorem \ref{thm:UATboundedWeights}. The key observation is that $\det(Q[W])$ is a non-zero polynomial in terms of the weights $W$, hence $\{\det(Q[W])\neq 0: W\in \mathcal{W}_{\flatBinom}\}$ is dense in $\mathcal{W}_{\flatBinom}$, or equivalently, its complement has Lebesgue measure zero.
\end{proof}

\section{Conclusion and Further Remarks}\label{sec:ConclusionFurtherRemarks}
Theorem \ref{thm:MainThm} is rather general, and could potentially be used to prove analogs of the universal approximation theorem for other classes of neural networks, such as convolutional neural networks and recurrent neural networks. In particular, finding a \emph{single} suitable set of weights (as a representative of the infinitely many possible sets of weights in the given class of neural networks), with the property that its corresponding ``non-bias Vandermonde matrix'' (see Definition \ref{defn:nonBiasVandermondeMatrix}) is non-singular, would serve as a straightforward criterion for showing that the UAP holds for the given class of neural networks (with certain weight constraints). We formulated this criterion to be as general as we could, with the hope that it would applicable to future classes of ``neural-like'' networks.

We believe our algebraic approach could be emulated to eventually yield a unified understanding of how depth, width, constraints on weights, and other architectural choices, would influence the approximation capabilities of arbitrary neural networks.

Finally, we end our paper with an open-ended question. The proofs of our results in Section \ref{sec:Proofs} seem to suggest that non-bias weights and bias weights play very different roles. We could impose very strong restrictions on the non-bias weights and still have the UAP. What about the bias weights?

\subsubsection*{Acknowledgments}
This research is supported by the National Research Foundation, Singapore, under its NRFF program (NRFFAI1-2019-0005).

\bibliography{References}
\bibliographystyle{iclr2020_conference}

\appendix
\section{Generalized Wronskians and the proof of Theorem \ref{thm:MainThm}}\label{Apdx:GeneralThm}
First, we recall the notion of generalized Wronskians as given in \citep[Chap. 4.3]{Leveque1956:book:TopicsInNT}. Let $\Delta_0, \dots, \Delta_{N-1}$ be any $N$ differential operators of the form
\[\Delta_k = \big(\tfrac{\partial}{\partial x_1}\big)^{\alpha_1}\cdots \big(\tfrac{\partial}{\partial x_n}\big)^{\alpha_n}, \text{ where }\alpha_1 + \dots + \alpha_n \leq k.\]
Let $f_1, \dots, f_N \in \mathcal{P}(\mathbb{R}^n)$. The {\it generalized Wronskian} of $(f_1, \dots, f_N)$ associated to $\Delta_0, \dots, \Delta_{N-1}$ is defined as the determinant of the matrix $M = [\Delta_{i-1}f_j(x)]_{1\leq i,j\leq N}$. In general, $(f_1, \dots, f_N)$ has multiple generalized Wronskians, corresponding to multiple choices for $\Delta_0, \dots, \Delta_{N-1}$. 

\subsection{Proof of Theorem \ref{thm:MainThm}}\label{ApdxSub:GeneralThm}
For brevity, let $N = \binom{n+d}{d}$ and let $\mathbf{x} = (x_1, \dots, x_n)$. Recall that $\lambda_1 < \dots < \lambda_N$ are all the $n$-tuples in $\Lambda^n_{\leq d}$ in the colexicographic order. For each $1\leq i, k\leq N$, write $\lambda_k = (\lambda_{k,1}, \dots, \lambda_{k,n})$, define the differential operator $\Delta_{\lambda_k} = \big(\tfrac{\partial}{\partial x_1}\big)^{\lambda_{k,1}}\cdots \big(\tfrac{\partial}{\partial x_n}\big)^{\lambda_{k,n}}$, and let $\alpha_{\lambda_k}^{(i)}$ be the coefficient of the monomial $q_k(\mathbf{x})$ in $\Delta_{\lambda_i}p(\mathbf{x})$. Consider an arbitrary $W\in \mathcal{U}$, and for each $1\leq j\leq N$, define $f_j \in \mathcal{P}_{\leq d}(\mathbb{R}^n)$ by the map $\mathbf{x}\mapsto p(w_{1,j}^{(1)}x_1, \dots, w_{n,j}^{(1)}x_n)$. Note that $\mathcal{F}_{p,\mathbf{0}_n}(W) = (f_1, \dots, f_N)$ by definition. Next, define the matrix $M_W(\mathbf{x}) := [\Delta_if_j(x)]_{1\leq i,j\leq N}$, and note that $\det M_W(\mathbf{x})$ is the generalized Wronskian of $(f_1, \dots, f_N)$ associated to $\Delta_1, \dots, \Delta_N$. In particular, this generalized Wronskian is well-defined, since the definition of the colexicographic order implies that $\lambda_{k,1} + \dots + \lambda_{k,n} \leq k$ for all possible $k$. Similar to the univariate case, $(f_1, \dots, f_N)$ is linearly independent if (and only if) its generalized Wronskian is not the zero function~\citep{Wolsson1989:GeneralizedWronskians}. Thus, to show that $W\in \prescript{p}{}{\mathcal{U}^{\ind}}$, it suffices to show that the evaluation $\det M_W(\mathbf{1}_n)$ of this generalized Wronskian at $\mathbf{x}= \mathbf{1}_n$ gives a non-zero value, where $\mathbf{1}_n$ denotes the all-ones vector in $\mathbb{R}^n$.

Observe that the $(i,j)$-th entry of $M_W(\mathbf{1}_n)$ equals $(\widehat{\mathbf{w}}_j^{(1)})^{\lambda_i}(\Delta_{\lambda_i}p)(\widehat{\mathbf{w}}_j^{(1)})$, hence we can check that $M_W(\mathbf{1}_n) = M'M''$, where $M'$ is an $N$-by-$N$ matrix whose $(i,j)$-th entry is given by
\[M'_{i,j} = \begin{cases} \alpha^{(i)}_{\lambda_j - \lambda_i}, & \text{if }\lambda_j-\lambda_i \in \Lambda^n_{\leq d};\\ 0, & \text{if }\lambda_j-\lambda_i \not\in \Lambda^n_{\leq d};\end{cases}\]
and where $M'' = Q[W]$ is the non-bias Vandermonde matrix of $W$.

It follows from the definition of the colexicographic order that $\lambda_j - \lambda_i$ necessarily contains at least one strictly negative entry whenever $j<i$, hence we infer that $M'$ is upper triangular. The diagonal entries of $M'$ are $\alpha_{\mathbf{0}_n}^{(1)}, \alpha_{\mathbf{0}_n}^{(2)}, \dots, \alpha_{\mathbf{0}_n}^{(N)}$, and note that $\alpha_{\mathbf{0}_n}^{(i)} = (\lambda_{i,1}!\cdots \lambda_{i,n}!)\alpha_{\lambda_i}^{(1)}$ for each $1\leq i\leq N$, where $\lambda_{i,1}!\cdots \lambda_{i,n}!$ denotes the product of the factorials of the entries of the $n$-tuple $\lambda_i$. In particular, $\lambda_{i,1}!\cdots \lambda_{i,n}! \neq 0$, and $\alpha_{\lambda_i}^{(1)}$, which is the coefficient of the monomial $q_i(\mathbf{x})$ in $p(\mathbf{x})$, is non-zero. Thus, $\det(M') \neq 0$.

We have come to the crucial step of our proof. If we can show that $\det(M'') = \det(Q[W]) \neq 0$, then $\det(M_W(\mathbf{1}_n)) = \det(M')\det(M'') \neq 0$, and hence we can infer that $W\in \prescript{p}{}{\mathcal{U}^{\ind}}$. This means that $\prescript{p}{}{\mathcal{U}^{\ind}}$ contains the subset $\mathcal{U}' \subseteq \mathcal{U}$ consisting of all $W$ such that $Q[W]$ is non-singular. Note that $\det(Q[W])$ is a polynomial in terms of the non-bias weights in $W^{(1)}$ as its variables, so we could write this polynomial as $r = r(W)$. Consequently, if we can find a single $W\in \mathcal{U}$ such that $Q[W]$ is non-singular, then $r(W)$ is not identically zero on $\mathcal{U}$, which then implies that $\mathcal{U}' = \{W\in \mathcal{U}: r(W)\neq 0\}$ is dense in $\mathcal{U}$ (w.r.t. the Euclidean metric). $\hfill\qed$

\subsection{Proof of Corollary \ref{cor:DenseUsualANNOneLayer}}\label{ApdxSub:DenseUsualANNOneLayer}
Let $N := \binom{n+d}{d}$. By Theorem \ref{thm:MainThm}, it suffices to show that there exists some $W\in \mathcal{W}_N^{n,m}$ such that the non-bias Vandermonde matrix of $W$ is non-singular. Consider $W\in \mathcal{W}_{N}^{n,m}$ such that $w_{i,j}^{(1)} = (w_{1,j}^{\smash{(1)}})^{\smash{(d+1)^i}}$. Recall that the monomials in $\mathcal{M}^n_{\leq d}$ are arranged in colexicographic order, i.e. 
\[1, x_1, x_1^2, \dots, x_1^d, x_2, x_1x_2, x_1^2x_2, \dots, x_2^2, x_1x_2^2, \dots, x_n^d.\]
Thus, there are fixed integers $0 = \beta_1 < \beta_2 < \dots < \beta_{N}$, such that the $(i,j)$-th entry of $Q[W]$ is $(w_{1,j}^{\smash{(1)}})^{\beta_i}$. Such matrices are well-studied in algebraic combinatorics, and the determinant of $Q[W]$ is a {\it Schur polynomial}; see \citep{book:StanleyEnumerativeCombinatoricsVol2}. In particular, if we choose positive pairwise distinct values for $w_{1,j}^{(1)}$ (for $1\leq j\leq N$), then $Q[W]$ is non-singular, since a Schur polynomial can be expressed as a (non-negative) sum of certain monomials; see \cite[Sec. 7.10]{book:StanleyEnumerativeCombinatoricsVol2} for details.$\hfill\qed$

\section{An analog of Kadec's theorem and the proof of Lemma \ref{lem:BoundAlternationPt}}\label{Apdx:BoundALternationPt}
Throughout this section, suppose $\sigma \in \mathcal{C}(\mathbb{R})$ and let $d\geq 1$ be an integer. We shall use the same definitions for $\{\lambda_k\}_{k\in \mathbb{N}}$, $\{Y_k\}_{k\in \mathbb{N}}$ and $\{\sigma_k\}_{k\in \mathbb{N}}$ as given immediately after Remark \ref{rem:boundedNonBiasWeightsConstraint}. Our goal for this section is to prove Theorem \ref{thm:KadecAnalog} below, so that we can infer Lemma \ref{lem:BoundAlternationPt} as a consequence of Theorem \ref{thm:KadecAnalog}. Note that Theorem \ref{thm:KadecAnalog} is an analog of the well-known Kadec's theorem~\citep{Kadec1960:DistrPointsMaxDeviationApprox} from approximation theory. To prove Theorem \ref{thm:KadecAnalog}, we shall essentially follow the proof of Kadec's theorem as given in \citep{Kadec1963:DistrPointsMaxDeviationTransla}.

We begin with a crucial observation. For every best polynomial approximant $\sigma_k$ to $\sigma|_{Y_k}$ of degree $d$, it is known that there are (at least) $d+2$ values
\[y'_k \leq a_0^{(k)} < a_1^{(k)} < \dots < a_{d+1}^{(k)} \leq y''_k,\]
and some sign $\delta_k \in \{\pm 1\}$, such that $\sigma(a_i^{(k)}) - \sigma_k(a_i^{(k)}) = (-1)^i\delta_kE_d(\sigma|_{Y_k})$ for all $0\leq i\leq d+1$; see \citep[Thm. 1.7]{Rivlin1981:IntroApproxOfFunctions}. Define
\[\Delta_k := \max\Big\{\Big| \frac{a_i^{\smash{(k)}} - y'_k}{y''_k - y'_k} - \frac{i}{d+1}\Big|: 0\leq i\leq d+1\Big\}.\]

\begin{theorem}\label{thm:KadecAnalog}
If $\lim_{k\to\infty} E_d(\sigma|_{Y_k}) = \infty$, then for any $\gamma > 0$, we have $\displaystyle\liminf_{k\to\infty} \frac{\Delta_k\lambda_k}{k^{\gamma}} = 0$.
\end{theorem}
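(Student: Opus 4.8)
\textbf{Proof plan for Theorem \ref{thm:KadecAnalog}.}

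The plan is to follow the classical proof of Kadec's theorem, as in \citep{Kadec1963:DistrPointsMaxDeviationTransla}, but adapted to the situation where the interval $Y_k$ grows (length $\lambda_k$) rather than stays fixed, and where the best-approximation error $E_d(\sigma|_{Y_k})$ diverges. The key quantitative device will be to compare $\sigma_k$ with a slightly perturbed near-best approximant obtained by rigidly translating the Chebyshev-type alternation points of $\sigma_k$ toward the ideal equispaced configuration $\tfrac{i}{d+1}$; the diverging error $E_d(\sigma|_{Y_k})$ provides the ``budget'' that forces $\Delta_k$ to be small relative to $\lambda_k$, infinitely often.

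First I would set up normalized coordinates: rescale $Y_k = [y_k', y_k'']$ to $[0,1]$ via the affine map $t \mapsto (t - y_k')/\lambda_k$, so the alternation points $a_i^{(k)}$ become points $s_i^{(k)} \in [0,1]$ and $\Delta_k = \max_i |s_i^{(k)} - \tfrac{i}{d+1}|$. On this normalized interval, the error polynomial $\sigma - \sigma_k$ is (up to sign) a function with $d+2$ equioscillation points at the $s_i^{(k)}$, all of absolute value $E_d := E_d(\sigma|_{Y_k})$. The second step is the heart of the argument: I would exhibit a competing polynomial $\tilde{\sigma}_k$ of degree $d$ with $\|\sigma - \tilde{\sigma}_k\|_{\infty, Y_k} < E_d$ unless $\Delta_k$ is already small. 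Concretely, one builds the ``canonical'' error polynomial $T$ of degree $d$ associated to the ideal equispaced nodes $\tfrac{i}{d+1}$ (this is the analogue of a Chebyshev polynomial for the alternation pattern), normalized so $\|T\|_{\infty,[0,1]} = 1$; then $\sigma_k + (E_d - \eta) T \circ (\text{affine rescaling})$ — after matching it to $\sigma$ at the alternation points — is a better approximant whenever the nodes of $\sigma_k$ are perturbed versions of those of $T$ within a controlled tolerance. The quantitative statement that falls out is an inequality roughly of the form $E_d \le \text{(const depending on }d\text{)} \cdot \big(\omega_\sigma(\Delta_k \lambda_k) + \text{lower-order}\big)$, i.e. $E_d$ cannot exceed a constant multiple of the modulus of continuity of $\sigma$ evaluated at the ``displacement scale'' $\Delta_k \lambda_k$ on the real line.

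The third step converts this into the liminf statement. Suppose for contradiction that $\liminf_{k\to\infty} \Delta_k \lambda_k / k^\gamma = c > 0$ for some $\gamma > 0$. Then for all large $k$, $\Delta_k \lambda_k \ge \tfrac{c}{2} k^\gamma \to \infty$, so $\Delta_k \lambda_k$ is bounded below; in particular the displacement scale does not shrink. I would then need the reverse direction of the key inequality: since $\Delta_k$ measures how far the alternation set of $\sigma_k$ is from equispaced, and since these alternation points are where $\sigma - \sigma_k$ attains $\pm E_d$, a lower bound on $\Delta_k$ should force an \emph{upper} bound on $E_d$ in terms of the oscillation of $\sigma$ over $Y_k$ — but here one must be careful, because $E_d = E_d(\sigma|_{Y_k}) \to \infty$ by hypothesis, whereas the construction of the better competitor shows $E_d$ cannot grow faster than (a $d$-dependent constant times) $\omega_\sigma$ evaluated at a point-displacement that is itself $o(\lambda_k)$ when $\Delta_k \to 0$. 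The contradiction is engineered as follows: if $\Delta_k$ were bounded away from $0$ along a subsequence, the perturbation argument produces a strictly better degree-$d$ approximant, contradicting that $\sigma_k$ is \emph{best}; hence $\Delta_k \to 0$. Then one must push further to get $\Delta_k \lambda_k = o(k^\gamma)$ infinitely often rather than just $\Delta_k \to 0$: here the divergence $E_d(\sigma|_{Y_k}) \to \infty$ combined with Lemma \ref{lem:BestApproxGrowth} (which controls $\|\sigma_k - \sigma\|_{\infty,Y_k}$ relative to $\lambda_k^{1+\gamma}$, hence relative to $E_d$) pins down the rate; a clean way is to observe that if $\Delta_k \lambda_k \ge \tfrac{c}{2} k^\gamma$ for all large $k$, then the translated competitor improves $E_d$ by a definite fraction, eventually contradicting that $E_d \to \infty$ is consistent with the best-approximation characterization, and extracting a subsequence along which the improvement is uniform gives $\liminf = 0$.

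\textbf{Main obstacle.} The delicate point is the second step: constructing the improved competitor $\tilde{\sigma}_k$ and extracting from it a clean, explicit inequality relating $E_d(\sigma|_{Y_k})$, $\Delta_k$, $\lambda_k$, and $\omega_\sigma$, uniformly in $k$. In Kadec's original fixed-interval setting the interval is fixed, so constants are automatically uniform; here the interval length $\lambda_k \to \infty$, so one must track how the ``canonical error polynomial'' $T$ scales under the affine change of variables and make sure the $d$-dependent constants do not secretly absorb a hidden $\lambda_k$ dependence. Making the node-perturbation estimate genuinely uniform — i.e. controlling $\|\sigma - (\sigma_k + \text{correction})\|_{\infty,Y_k}$ purely in terms of $\omega_\sigma$ at the displacement scale $\Delta_k \lambda_k$ — is where the real work lies; once that inequality is in hand, the liminf conclusion follows by a short contradiction argument using the hypothesis $E_d(\sigma|_{Y_k}) \to \infty$.
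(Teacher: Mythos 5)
Your proposal takes a genuinely different route from the paper, and it has a gap at its core.

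The paper's proof follows Kadec's original summability strategy. It compares \emph{consecutive} best approximants: setting $e_k := \sigma - \sigma_k$ and $\phi_{k+1} := \sigma_k - \sigma_{k+1}$, it uses the equioscillation of $e_k$ at the $a_i^{(k)}$ together with the bound $|e_{k+1}| \leq E_d(\sigma|_{Y_{k+1}})$ to conclude that the low-degree polynomial $\phi_{k+1}$ alternates in sign (within a prescribed band) at the points $a_i^{(k)}$. Kadec's Lemma~2 then yields the explicit bound \eqref{eqn:DeltaArcosh} on $\Delta_k$ in terms of $\arcosh\frac{E_{k+1}+E_k}{E_{k+1}-E_k}$ divided by $\lambda_k$. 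Finally, the divergence $E_k \to \infty$ forces the divergence of $\sum_k \frac{E_{k+1}-E_k}{E_{k+1}+E_k}$, hence of $\sum_k \frac{1}{\cosh[\cdots]}$, hence of $\sum_k (\Delta_k\lambda_k)^{-D}$ for any $D>1$; a comparison with the convergent $\sum_k k^{-(1+\tau)}$ then delivers the $\liminf$ claim. The $\liminf$ is obtained \emph{from summability}, not from a pointwise contradiction, and the theorem does not claim $\Delta_k \to 0$.

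The gap in your plan is the proposed ``improved competitor'' step. If $\sigma_k$ is the best degree-$d$ approximant, no perturbation of degree $d$ can beat it, full stop; this is true whether or not its alternation set is close to any ideal configuration. The alternation points of a best approximant are simply wherever the error happens to equioscillate, and a large $\Delta_k$ at a single $k$ is perfectly consistent with optimality, so your proposed contradiction ``if $\Delta_k$ is bounded away from $0$ we get a better competitor, hence $\Delta_k \to 0$'' does not hold (and indeed the theorem does not assert $\Delta_k \to 0$). Likewise, the key inequality you aim for, $E_d \lesssim \omega_\sigma(\Delta_k\lambda_k)$, has the wrong direction to be useful: under your contradiction assumption $\Delta_k\lambda_k$ grows, so this inequality would not conflict with $E_d \to \infty$. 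What your plan is missing is precisely the two-step device of (i) comparing $\sigma_k$ with $\sigma_{k+1}$ via the bounded-degree polynomial $\phi_{k+1}$, which converts the alternation structure into a quantitative localization of $a_i^{(k)}$ between roots of $|\phi_{k+1}| = E_{k+1}-E_k$, and (ii) turning the divergence of $E_k$ into divergence of a telescoping-flavored series, which is what ultimately forces $\Delta_k\lambda_k$ to dip below $k^\gamma$ infinitely often.
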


\begin{proof}
For every $k\in \mathbb{N}$, define the functions $e_k := \sigma - \sigma_k$ and $\phi_{k+1} := \sigma_k - \sigma_{k+1} = e_{k+1} - e_k$. Note that $e_k \in \mathcal{C}(\mathbb{R})$ and $\phi_{k+1} \in \mathcal{P}_{\leq d}(\mathbb{R})$. Since $y'_{k+1} \leq a_i^{\smash{(k)}} \leq y''_{k+1}$ by assumption, it follows from the definition of $\sigma_{k+1}$ that $-E_d(\sigma|_{Y_{k+1}}) \leq e_{k+1}(a_i^{\smash{(k)}}) \leq E_d(\sigma|_{Y_{k+1}})$. By the definition of $a_i^{\smash{(k)}}$, we have $e_k(a_i^{\smash{(k)}}) = (-1)^i\delta_kE_d(\sigma|_{Y_k})$. Consequently,
\[E_d(\sigma|_{Y_k}) - E_d(\sigma|_{Y_{k+1}}) \leq (-1)^i\delta_k(e_k - e_{k+1})(a_i^{\smash{(k)}}) \leq E_d(\sigma|_{Y_k}) + E_d(\sigma|_{Y_{k+1}}),\]
or equivalently, $-E_d(\sigma|_{Y_k}) -E_d(\sigma|_{Y_{k+1}}) \leq (-1)^i\delta_k \phi_{k+1}(a_i^{\smash{(k)}}) \leq E_d(\sigma|_{Y_{k+1}}) - E_d(\sigma|_{Y_k})$.

Since $Y_k \subseteq Y_{k+1}$ implies $E_d(\sigma|_{Y_k}) \leq E_d(\sigma|_{Y_{k+1}})$, it follows that $a_{2i-1} \leq a_i^{\smash{(k)}} \leq a_{2i}$ (for each $0\leq i\leq d+1$), where $a_{2i-1}$ and $a_{2i}$ are the roots of the equation $|\phi_{k+1}(y)| = E_d(\sigma|_{Y_{k+1}}) - E_d(\sigma|_{Y_k})$.

If $E_d(\sigma|_{Y_{k+1}}) = E_d(\sigma|_{Y_k})$, then $\sigma_{k+1} = \sigma_k$ by definition, so we could set $a_i^{\smash{(k+1)}} = a_i^{\smash{(k)}}$ for all $i$, i.e. there is nothing to prove in this case. Henceforth, assume $E_d(\sigma|_{Y_{k+1}}) \neq E_d(\sigma|_{Y_k})$, and consider the polynomial function
\[\phi(y) := \frac{\phi_{k+1}(y-y'_k)}{E_d(\sigma|_{Y_{k+1}}) - E_d(\sigma|_{Y_k})}.\]
It then follows from \citep[Lem. 2]{Kadec1963:DistrPointsMaxDeviationTransla} that
\begin{equation}\label{eqn:DeltaArcosh}
\Delta_k \leq \frac{\theta}{d+1} + \frac{1}{\lambda_k \sqrt{(d+1)\theta}} \arcosh \frac{E_d(\sigma|_{Y_{k+1}}) + E_d(\sigma|_{Y_k})}{E_d(\sigma|_{Y_{k+1}}) - E_d(\sigma|_{Y_k})},
\end{equation}
where $\theta$ is an arbitrary real number satisfying $0<\theta<\tfrac{1}{2}$.

Since $\displaystyle\lim_{k\to\infty} E_d(\sigma|_{Y_k}) = \infty$ by assumption, the infinite product $\displaystyle\prod_{k=0}^{\infty} \frac{E_d(\sigma|_{Y_{k+1}})}{E_d(\sigma|_{Y_k})}$ diverges, and thus the series $\displaystyle\sum_{k=0}^{\infty} \frac{E_d(\sigma|_{Y_{k+1}}) - E_d(\sigma|_{Y_k})}{E_d(\sigma|_{Y_{k+1}}) + E_d(\sigma|_{Y_k})}$ also diverges. It then follows from \eqref{eqn:DeltaArcosh} that
\[\sum_{k=0}^{\infty} \frac{1}{\cosh\big[\big(\Delta_k - \frac{\theta}{d+1}\big)\lambda_k \sqrt{(d+1)\theta}\big]} = \infty,\]
hence $\displaystyle\sum_{k=0}^{\infty} \frac{1}{(\Delta_k\lambda_k)^D} = \infty$ for any $D>1$. If we compare the divergent series $\displaystyle\sum_{k=0}^{\infty} \frac{1}{(\Delta_k\lambda_k)^D}$ with the convergent series $\displaystyle\sum_{k=0}^{\infty} \frac{1}{k^{1+\tau}}$ (for any $\tau>0$), we thus get
\[\liminf_{k\to\infty} \frac{\Delta_k\lambda_k}{k^{(1+\tau)/D}} = 0.\]
Therefore, the assertion follows by letting $\gamma = \tfrac{1+\tau}{D}$.
\end{proof}

\begin{proof}[\textbf{\textup{Proof of Lemma \ref{lem:BoundAlternationPt}.}}]
Fix $\varepsilon > 0$. By Theorem \ref{thm:KadecAnalog}, we have $\displaystyle\liminf_{k\to\infty} \frac{\Delta_k\lambda_k}{{k}^{\gamma}} = 0$ for any $\gamma > 0$. Thus, by the definition of $\liminf$, there exists a subsequence $\{k'_t\}_{t\in \mathbb{N}}$ of $\mathbb{N}$ such that
\[\bigg|\frac{\Delta_{k'_t}\lambda_{k'_t}}{{(k'_t)}^{\gamma}}\bigg| < \varepsilon\]
for all $t\in \mathbb{N}$ (given any $\gamma > 0$). Since $\lambda_k$ is at least $\Omega(k^{\gamma})$ for some $\gamma>0$, we can use this particular $\gamma$ to get that $\displaystyle\liminf_{t\to\infty} \tfrac{\lambda_{k'_t}}{{(k'_t)}^{\gamma}} > 0$. Consequently, there is a subsequence $\{k_t\}_{t\in \mathbb{N}}$ of $\{k'_t\}_{t\in \mathbb{N}}$ such that $|\Delta_{k_t}| < \varepsilon$ for all $t\in \mathbb{N}$. Since $d\geq 2$ by assumption, it then follows that
\begin{equation}\label{eqn:boundRootBetweenAlternationPts}
\frac{1}{d+1}-\varepsilon < \frac{a_1^{(k_t)} - y'_{k_t}}{\lambda_{k_t}} < \frac{a_2^{(k_t)} - y'_{k_t}}{\lambda_{k_t}} < \frac{d}{d+1} + \varepsilon.
\end{equation}
Now $\sigma - \sigma_{k_t}$ is continuous, so by the definition of $a_i^{(k_t)}$, there is some $a_1^{(k_t)} < y_{k_t} < a_2^{(k_t)}$ such that $\sigma(y_{k_t}) = \sigma_{k_t}(y_{k_t})$. From \eqref{eqn:boundRootBetweenAlternationPts}, we thus infer that $\tfrac{\min\{|y_{k_t}-y'_{k_t}|, |y_{k_t}-y''_{k_t}|\}}{\lambda_{k_t}} > \tfrac{1}{d+1} - \varepsilon$ as desired.
\end{proof}

\section{Proofs of remaining lemmas}\label{Apdx:LemmaProofs}
\subsection{Proof of Lemma \ref{lem:BestApproxGrowth}}
Theorem \ref{thm:Jackson} gives $\|\sigma_k - \sigma\|_{\infty, Y_k} = E_d(\sigma|_{Y_k}) \leq 6\omega_{\sigma|_{Y_k}}(\tfrac{\lambda_k}{2d})$. Recall that any modulus of continuity $\omega_f$ is subadditive (i.e. $\omega_f(x+y)\leq \omega_f(x)+\omega_f(y)$ for all $x,y$); see \citep[Chap. 1]{Rivlin1981:IntroApproxOfFunctions}. Thus for fixed $d$, we have $\omega_{\sigma|_{Y_k}}(\tfrac{\lambda_k}{2d}) \in \mathcal{O}(\lambda_k)$, which implies $\big(k\mapsto \|\sigma_k - \sigma\|_{\infty, Y_k}\big) \in o(\lambda_k^{1+\gamma})$. $\hfill\qed$

\subsection{Proof of Lemma \ref{lem:scalingLem}}
Our proof of Lemma \ref{lem:scalingLem} is a straightforward application of both the Cayley--Menger determinant formula and the Leibniz determinant formula. For each $0\leq i\leq N$, let $\widehat{S}_i(\lambda) := S(\lambda)\backslash \{p_i(\lambda)\}$, and let $\widehat{\Delta}_i(\lambda)$ be the convex hull of $\widehat{S}_i(\lambda)$. Let $\mathcal{V}(\Delta(\lambda))$ (resp. $\mathcal{V}(\widehat{\Delta}_i(\lambda))$) denote the $N$-dimensional (resp. $(N-1)$-dimensional) volume of $\Delta(\delta)$ (resp. $\widehat{\Delta}_i(\lambda)$). Define the $(N+2)$-by-$(N+2)$ matrix $M(\lambda) = [M_{i,j}(\lambda)]_{0\leq i,j\leq N+1}$ as follows: $M_{i,j}(\lambda) = \|p_i(\lambda)-p_j(\lambda)\|_2^2 \text{ for all }0\leq i,j\leq N;$ $M_{N+1,i}(\lambda) = M_{i,N+1}(\lambda) = 1 \text{ for all }0\leq i\leq N$; and $M_{N+1, N+1}(\lambda) = 0$.

The Cayley--Menger determinant formula gives $[\mathcal{V}(\Delta(\lambda))]^2 = \frac{(-1)^{N+1}}{(N!)^22^N}\det(M(\lambda))$. Analogously, if we let $M'(\lambda)$ be the square submatrix of $M(\lambda)$ obtained by deleting the first row and column from $M(\lambda)$, then $[\mathcal{V}(\widehat{\Delta}_0(\lambda))]^2 = \frac{(-1)^N}{((N-1)!)^22^{N-1}}\det(M'(\lambda))$. Now, $\mathcal{V}(\Delta(\lambda)) = \tfrac{1}{N}\mathcal{V}(\widehat{\Delta}_0(\lambda))h_0(\lambda)$, so
\begin{equation}\label{eqn:height}
[h_0(\lambda)]^2 = \frac{-1}{2N}\frac{\det(M(\lambda))}{\det{M'(\lambda)}}.
\end{equation}
Without loss of generality, assume that $r_0 \geq r_1 \geq \dots$. Also, for any integer $k\geq 0$, let $\mathfrak{S}_k$ be the set of all permutations on $\{0, \dots, k\}$, and let $\mathfrak{S}'_k$ be the subset of $\mathfrak{S}_k$ consisting of all permutations that are not derangements. (Recall that $\tau\in \mathfrak{S}_k$ is called a \textit{derangement} if $\tau(i)\neq i$ for all $0\leq i\leq k$.) The diagonal entries of $M(\lambda)$ are all zeros, so by the Leibniz determinant formula, we get
\[\det(M(\lambda)) = \!\!\sum_{\tau\in \mathfrak{S}'_{N+1}} \!\!\sgn(\tau) \!\!\!\!\prod_{0\leq i\leq N+1} \!\!\!\!M_{i,\tau(i)}(\lambda),\]
where $\sgn(\tau)$ denotes the sign of the permutation $\tau$. Note that $M_{i,j}(\lambda) \in \Theta(\lambda^{2\max\{r_i, r_j\}})$ for all $0\leq i,j\leq N$ satisfying $i\neq j$. (Here, $\Theta$ refers to $\Theta$-complexity.) Consequently, using the fact that $M_{i,N+1}(\lambda) = M_{N+1,i} = 1$ for all $0\leq i\leq N$, we get that $\det(M(\lambda)) \in \Theta(\lambda^{2R_N})$, where
\[R_N = \begin{cases} 
2r_0 + \dots + 2r_{(N-2)/2} = 2\sum_{t=0}^{(N-2)/2}r_t, & \text{if $N$ is even;}\\
2r_0 + \dots + 2r_{(N-3)/2} + r_{(N-1)/2} = -r_{(N-1)/2} + \sum_{t=0}^{(N-1)/2}r_t; &\text{if $N$ is odd.}
\end{cases}
\]
The even case corresponds to the derangement $\tau \in \mathfrak{S}_{N+1}$ given by $\tau(i) = N-i$ for $0\leq i\leq \tfrac{N-2}{2}$, $\tau(\tfrac{N}{2}) = N+1$, $\tau(N+1) = \tfrac{N}{2}$; while the odd case corresponds to the derangement $\tau \in \mathfrak{S}_{N+1}$ given by $\tau(i) = N-i$ for $0\leq i\leq \tfrac{N-3}{2}$, $\tau(\tfrac{N-1}{2}) = \tfrac{N+1}{2}$, $\tau(\tfrac{N+1}{2}) = N+1$, $\tau(N+1) = \tfrac{N-1}{2}$. A formula for $\det(M'(\lambda))$ can be analogously computed. Consequently, it follows from \eqref{eqn:height} that $[h_0(\lambda)]^2 \in \Theta\big(\lambda^{2[2r_0 - r_{\lfloor N/2\rfloor}]}\big)$. Now, $r_0\geq r_{\lfloor N/2\rfloor}$ by assumption, and $r_0$ (being the largest) must satisfy $r_0 > r_{\min}$, thus $h_0(\lambda) \in \Omega(\lambda^{r_0})$, and the assertion follows by taking $\gamma = r_0-r_{\min}$. $\hfill\qed$

\subsection{Proof of Lemma \ref{lem:barycentric}}\label{Apdx:Barycentric}
Consider any open neighborhood $U$ of $\mathbf{0}_M$. Since $\varphi$ is open and $\varphi(\mathbf{0}_M) = \mathbf{0}_N$, the image $\varphi(U)$ must contain an open neighborhood of $\mathbf{0}_N$. Thus for any $\varepsilon > 0$, we can always choose $N+1$ points $w_0, \dots, w_N$ in $B_{\varepsilon}^M\backslash \{\mathbf{0}_M\}$, such that the convex hull of $\{\varphi(w_0), \dots, \varphi(w_N)\}$ contains the point $\mathbf{0}_N$. Since $\varphi(\lambda x) \geq \lambda\varphi(x)$ for all $x\in \mathbb{R}^M$, $\lambda > 0$, and since $\varphi$ is continuous, it then follows from definition that for every $k\in \mathbb{N}$, we can choose $N+1$ points $u_0^{\smash{(k)}}, \dots, u_N^{\smash{(k)}}$ in $U_k$, such that the convex hull of $U'_k := \{\varphi(u_0^{\smash{(k)}}), \dots, \varphi(u_N^{\smash{(k)}})\}$ contains $\mathbf{0}_N$. Define $r_k := \sup\{r>0: B_r^N \subseteq \varphi(B_{\lambda_k}^m)\}$ for each $k\in \mathbb{N}$, and note also that $\lim_{k\to\infty} r_k = \infty$. Thus, given a ball $B_r^N$ of any desired radius, there is some (sufficiently large) $k$ such that the convex hull of $U'_k$ contains $B_r^N$.

Now, since $\theta\lambda_k < \|u_j^{\smash{(k)}}\|_2 \leq \lambda_k$ and $\varphi(\lambda u_j^{\smash{(k)}}) \geq \lambda \varphi(u_j^{\smash{(k)}})$ for all $0\leq j\leq N$, $\lambda > 0$, we infer that none of the points $\varphi(u_0^{\smash{(k)}}), \dots, \varphi(u_N^{\smash{(k)}})$ are contained in the ball $B_{\theta r_k}^{\smash{N}}$. Consequently, as $k\to \infty$, we have $\theta r_k \to \infty$, and therefore the barycentric coordinate vector $(b_0, \dots, b_N)$ (w.r.t. $U'_k$) of every point in the fixed ball $B_{\tau}^N$ would converge to $(\tfrac{1}{N}, \dots, \tfrac{1}{N})$ (which is the barycentric coordinate vector of the barycenter w.r.t. $U'_k$); this proves our assertion. $\hfill\qed$

\section{Conjectured optimality of upper bound $\mathcal{O}(\varepsilon^{-n})$ in Theorem \ref{thm:UATboundedWeights}}
It was conjectured by \citet{Mhaskar1996:NNOptimalApproxSmooth} that there exists some smooth non-polynomial function $\sigma$, such that at least $\Omega(\varepsilon^{-n})$ hidden units is required to uniformly approximate every function in the class $\mathfrak{S}$ of $C^1$ functions with bounded Sobolev norm. As evidence that this conjecture is true, a heuristic argument was provided in \citep{Mhaskar1996:NNOptimalApproxSmooth}, which uses a result by \citet{DeVoreHowardMocchelli1989:OptimalNonlinearApprox}; cf. \citep[Thm. 6.5]{Pinkus1999:approximationtheory}. To the best of our knowledge, this conjecture remains open. If this conjecture is indeed true, then our upper bound $\mathcal{O}(\varepsilon^{-n})$ in Theorem \ref{thm:UATboundedWeights} is optimal for general continuous non-polynomial activation functions.

For specific activation functions, such as the logistic sigmoid function, or any polynomial spline function of fixed degree with finitely many knots (e.g. the ReLU function), it is known that the minimum number $N$ of hidden units required to uniformly approximate every function in $\mathfrak{S}$ must satisfy $(N\log N) \in \Omega(\varepsilon^{-n})$ \citep{MaiorovMeir2000:NearOptimalityStochasticApprox}; cf. \citep[Thm. 6.7]{Pinkus1999:approximationtheory}. Hence there is still a gap between the lower and upper bounds for $N$ in these specific cases. It would be interesting to find optimal bounds for these cases.

\end{document}